\documentclass[11pt]{article}

\usepackage[final]{acl}
\usepackage{algorithm}
\usepackage{algpseudocode}
\usepackage{times}
\usepackage{latexsym}
\usepackage{amssymb}
\usepackage{amsmath}
\usepackage{amsthm}
\usepackage[T1]{fontenc}
\usepackage{enumitem}
\newtheorem{theorem}{Theorem}
\newtheorem{lemma}{Lemma}

\newtheorem{assumption}{Assumption}

\newtheorem{remark}{Remark}
\newtheorem*{lemmafull}{Lemma~\ref{lem:implicit_patch}}
\DeclareMathOperator*{\argmin}{arg\,min}
\usepackage{soul}
\usepackage{stfloats}
\usepackage{subcaption}

\usepackage[utf8]{inputenc}

\usepackage{microtype}

\usepackage{inconsolata}

\usepackage{graphicx}

\usepackage{booktabs}
\usepackage{multirow}
\usepackage{adjustbox}
\usepackage{colortbl}
\usepackage{listings}
\usepackage{xcolor}
\usepackage[most]{tcolorbox}

\newtcolorbox{promptbox}[1]{
    float*=t,
    enhanced,
    colback=white,
    colframe=blue!70!black,
    coltitle=white,
    fonttitle=\bfseries,
    title=#1,
    boxrule=0.8pt,
    arc=3pt,
    left=6pt,
    right=6pt,
    top=4pt,
    bottom=4pt,
    breakable
}

\title{Graph-based Target Back-Propagation for Context Adaptation in Multi-LLM Agentic Systems}

\author{
 \textbf{Tan Zhu},
 \textbf{Tong Yao},
 \textbf{Kananart Kuwaranancharoen},
 \textbf{Amit Singh},
  \textbf{Yushang Lai},
 \\
 \textbf{Deepa Mohan},
 \textbf{Shankar Bhargava}
\\
Retail Intelligence, Walmart Global Tech
\\
Sunnyvale, CA, USA
\\
 \small{
   \{\href{mailto:tan.zhu@walmart.com}{tan.zhu}, 
   \href{mailto:tong.yao@walmart.com}{tong.yao}, 
   \href{mailto:Kananart.Kuwaranancharoen@walmart.com}{kananart.kuwaranancharoen}, 
   \href{mailto:Amit.Singh2@walmart.com}{amit.singh2},
    \href{mailto:Yushang.Lai@walmart.com}{yushang.lai}}\\
 \small{    
   \href{mailto:deepa.mohan@walmart.com}{deepa.mohan}, 
   \href{mailto:Shankara.bhargava@walmart.com}{shankara.bhargava}\}
   @walmart.com
 }
}

\begin{document}
\maketitle
\begin{abstract}

Context adaptation automates prompt engineering in LLM-based systems by iteratively revising tunable prompts from task feedback, without modifying model weights. Extending this paradigm to multi-LLM agentic systems is crucial: existing methods suffer from inaccurate credit assignment and lack convergence guarantees. We propose \textbf{G}raph-based \textbf{T}arget \textbf{B}ack-\textbf{P}ropagation (GTBP), a context adaptation framework for agentic workflows modeled as directed acyclic graphs. 
GTBP propagates local target outputs backward through the workflow graph and uses target--output discrepancies to guide a stage-wise prompt update mechanism.
Theoretically, we show that GTBP's stage-wise prompt updates become stable over iterations, and that a sufficiently capable LLM optimizer can decrease the overall objective.
Empirically, GTBP consistently outperforms strong baselines across three benchmarks while maintaining comparable computational cost.
\end{abstract}
\section{Introduction}

With input contexts such as task instructions, tool-use guidelines, and memory-retrieval
instructions that guide how external information is selected and incorporated \citep{schulhoff2024prompt},
large language models (LLMs) can coordinate multiple LLM-powered modules in
\emph{agentic systems} for multi-step problem solving
\citep{li2025review}. This paper studies \textbf{\emph{context adaptation}} for such
multi-module LLM systems, where input contexts are automatically revised from
task feedback while model weights remain frozen.
Let $\Phi$ denote an agentic system with frozen model weights and a tunable
prompt set $\Pi_{\Phi}$. Given a task distribution $\mathcal{T}$ and a
task-level loss function $\ell$, context adaptation aims to find the optimal $\Pi_{\Phi}^*$ minimizing $\ell$:
\begin{equation}
\Pi_{\Phi}^{*}
=
\argmin_{\Pi_{\Phi}}
\mathbb{E}_{x\sim\mathcal{T}}
\left[
\ell\!\left(
\Phi(x; \Pi_{\Phi})
\right)
\right].
\label{eq:intro_prompt_optimization}
\end{equation}


Early agentic systems often rely on human-designed contexts, prompts, and instructions to coordinate multiple LLM-powered modules
\citep{hong2024metagpt,qian2024chatdev}. Context adaptation automates this
process by updating tunable prompts from task feedback, without modifying the model weights. 
Recent advances in LLMs, including
longer context windows and lower-latency inference
\citep{gemini2024gemini15,openai2024gpt4o}, make it increasingly feasible to
treat LLMs not only as task solvers but also as optimizers that iteratively
inspect, critique, and revise prompts. This makes
context adaptation a practical alternative to costly manual prompt engineering,
especially for workflows in multi-module agentic systems.

In multi-module agentic systems, context adaptation introduces
\textbf{\emph{credit assignment}}: final errors must be attributed to the
responsible module-specific contexts \citep{meulemans2021credit}. 
Agentic workflows involve text-based contexts, discrete actions, and
black-box LLM modules, making gradient-based credit assignment, widely used in artificial neural networks (ANNs), difficult to
apply directly. 
To assign credit, methods such as GEPA and ACE employ LLM calls that take the entire agentic workflow as input
\citep{agrawal2025gepa,zhang2025ace}. However, because this assignment is
performed implicitly rather than through an explicitly defined
inference procedure, it can be non-deterministic and unclear which prompts should be
updated. This leads to the challenge known as \emph{attribution ambiguity}
\citep{huang2026textresnet}.

To explicitly define credit-assignment pipelines over multi-agent systems,
methods such as TextGrad \citep{yuksekgonul2024textgrad} and Agentic Neural
Networks \citep{ma2025agentic} model multi-agent systems with
graph structures and propagate LLM-generated textual feedback
across agent nodes. However, these pipelines remain largely heuristic and are
not directly coupled with the prompt optimization objective in
Eq.~\ref{eq:intro_prompt_optimization}. Moreover, after credits are assigned, 
heuristic rule-based LLM optimizer are employed to update prompts. Their convergence properties in multi-module agentic systems remain theoretically underexplored.


To address these limitations, we draw inspiration from Difference Target Propagation (DTP), which assigns credit by propagating local targets rather than
gradients \citep{lee2015difference,meulemans2020theoretical}. Building on this
idea, we propose \textbf{G}raph-based \textbf{T}arget
\textbf{B}ack-\textbf{P}ropagation (\textbf{GTBP}), a context adaptation pipeline for agentic systems.
In summary, this paper makes the following contributions:

\begin{itemize}[leftmargin=1.2em, nosep]
    \item We propose GTBP, a graph-based credit-assignment framework for
context adaptation in multi-LLM agentic systems. Instead of relying on
trajectory-level reflection, as shown in Fig. \ref{fig:overview of gtbp}, GTBP follows the workflow graph to infer local
target outputs for sub-modules and uses target-output
discrepancies to guide stage-wise prompt updates.
    
    \item We provide a theoretical analysis of GTBP, showing that its stage-wise prompt updates become stable over iterations and that the task-level objective defined in Eq.~\ref{eq:intro_prompt_optimization} can be decreased with a sufficiently capable LLM optimizer.



    \item We empirically validate GTBP on real-world benchmark datasets and show
that it consistently improves prompt optimization performance compared with
strong baseline methods.
\end{itemize}

\begin{figure*}
    \centering
    \includegraphics[width=0.9\textwidth]{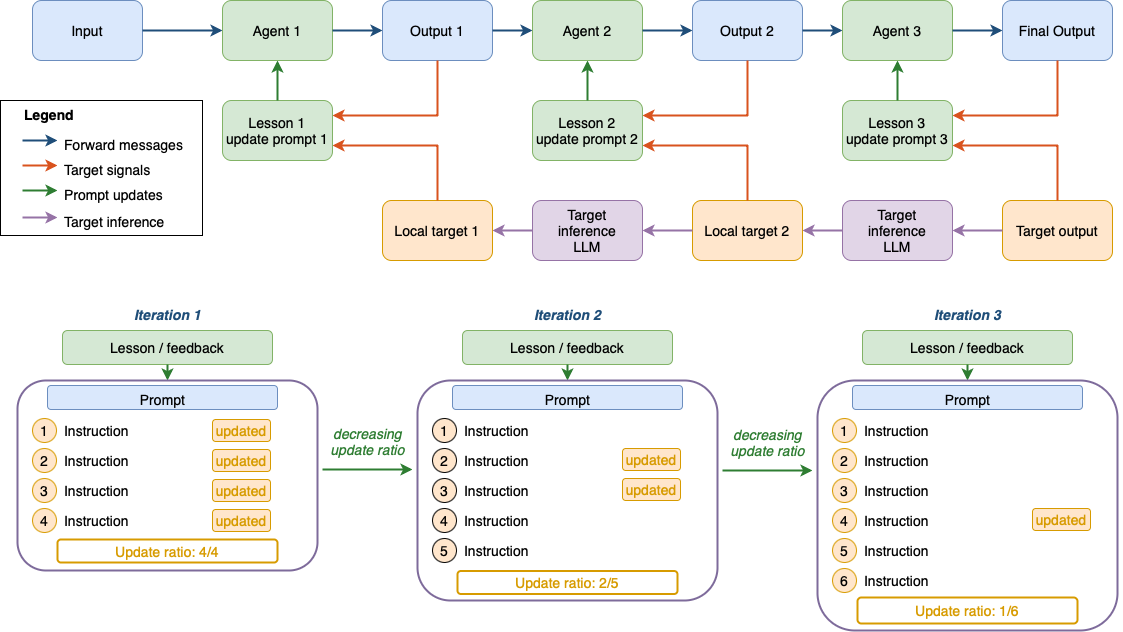}
   \caption{
Overview of GTBP for context adaptation. After forward execution, GTBP infers local targets for intermediate modules from the final target output. Each module then compares its current output with the inferred target, and the resulting credit signal is used to update its prompt without requiring gradient access.
}
    \label{fig:overview of gtbp}
\end{figure*}
\section{Related Work}
\subsection{Credit Assignment}


Credit assignment is a fundamental problem in learning systems, where global
feedback must be attributed to the internal components responsible for an
outcome \citep{minsky1961steps}. In ANNs, it is commonly
addressed through chain-rule-based gradient backpropagation
\citep{bottou2018optimization}. Alternative methods explore different credit
assignment mechanisms beyond gradient backpropagation. For example, Decoupled
Neural Interfaces decouple credit assignment by learning synthetic gradients for
intermediate modules \citep{jaderberg2017decoupled}. Difference Target
Propagation (DTP), in contrast, offers a target-based alternative by assigning
desired outputs to intermediate components \citep{bengio2014autoencoders,lee2015difference,meulemans2020theoretical}.

With rapid advances in LLM context length \citep{wang2024beyond} and inference efficiency
\citep{zhou2024efficient}, methods such as MIPROv2 \cite{miprov2}, GEPA
\citep{agrawal2025gepa}, and ACE \citep{zhang2025ace} use LLM reasoning
to approximate credit assignment over textual execution traces. However, because credit is assigned through black-box LLM-based reflection, the process remains difficult to control. To make it more structured, recent work such as TextGrad \citep{yuksekgonul2024textgrad} and Agentic Neural Networks \citep{ma2025agentic} propagates textual gradients through neural-network-like agentic workflows. TextResNet instead improves attribution through a credit-assignment-friendly residual architecture for signal decomposition and routing \citep{huang2026textresnet}.

\subsection{Context Adaptation}
Context adaptation extends manual prompt engineering \citep{schulhoff2024prompt} by automatically revising input contexts, such as prompts, instructions, or retrieved evidence, using task feedback or LLM-generated feedback \citep{sahoo2024systematic}. Representative methods differ in how they update textual contexts. 
Early automatic prompt optimization methods rely on non-textual feedback, such as task-level scores. APE \citep{zhou2022large} and OPRO \citep{yang2023large} search over LLM-generated prompt candidates using scalar evaluation signals. Moving beyond metric-based signals, recent studies such as Reflexion \citep{shinn2024reflexion} and GEPA \citep{agrawal2025gepa} exploit textual feedback for iterative improvement. With textual credit signals, prompts can be updated in more structured ways, such as playbooks in ACE \citep{zhang2025ace}.

The theoretical understanding of context adaptation is still emerging. Since modern LLMs are largely built on Transformer architectures, early work relates context adaptation to weight updates, showing that prompt updating can induce effects equivalent to parameter updates \citep{li2024closeness}. Recent work further suggests that context-conditioned behavior may be related to implicit optimization over model weights \citep{dherin2025learning}, although a general theoretical characterization of this relationship remains open.


\section{Problem Formulation}
\label{sec:problem_formulation}

In this section, we instantiate the objective in
Eq.~\ref{eq:intro_prompt_optimization} by defining the agentic system as a
directed graph, and specifying the workflow studied in this paper.

An agentic workflow is modeled as a directed acyclic graph
$\mathcal{G} = (\mathcal{V}, \mathcal{E})$, 
where each node $v \in \mathcal{V}$ corresponds to a LLM-invoking
submodule 
\begin{equation*}
    O_v = M_v(I_v; \pi_v)
\end{equation*}
parameterized by a tunable prompt $\pi_v$ and input $I_v$. 
For any node $v$, the input is formed by aggregating
the outputs of its predecessor nodes 
\begin{equation*}
    I_v = \{\, O_u \mid (u,v) \in \mathcal{E} \,\}.
\end{equation*}
In this graph-based workflow, node outputs are passed to downstream nodes and
aggregated to form subsequent inputs. Algorithm~\ref{alg:graph_forward} describes how the workflow executes the graph and combines intermediate nodes' outputs into the final output.


\begin{algorithm}[t]
\caption{Forward Execution of a Graph-based Agentic Workflow}
\label{alg:graph_forward}
\small
\begin{algorithmic}[1]
\Require Graph $\mathcal{G}=(\mathcal{V},\mathcal{E})$ with input-node set
$\mathcal{V}_{\mathrm{in}}$ and output node $v_{\mathrm{out}}$, external input
$x$, prompts $\{\pi_v\}_{v\in\mathcal{V}}$
\Ensure Final system output $y$

\For{each node $v \in \mathcal{V}$ in topological order}
    \If{$v \in \mathcal{V}_{\mathrm{in}}$}
        \State $I_v \gets x_v$
    \Else
        \State $I_v \gets \{\, O_u \mid (u,v)\in\mathcal{E} \,\}$
    \EndIf
    \State $O_v \gets M_v(I_v;\pi_v)$
\EndFor

\State $y \gets O_{v_{\mathrm{out}}}$
\end{algorithmic}
\end{algorithm}
In this paper, we focus on a one-hidden-layer agentic workflow. Given an input
message $x$, the workflow contains $K$ prompt-tunable LLM submodules that
produce intermediate responses
\begin{equation}
\label{first_layer}
O_k = M_k(x;\pi_k), \qquad k=1,\dots,K.
\end{equation}
Given these intermediate responses, the workflow contains an output node that aggregates them with the original input to produce the final output:
\begin{equation}
\label{second_layer}
y
=
M_{\mathrm{out}}(x,O_1,\dots,O_K;\pi_{\mathrm{out}}).
\end{equation}

Throughout this paper, all LLM parameters are fixed, and only the prompts
$\{\pi_1,\dots,\pi_K,\pi_{\mathrm{out}}\}$ are updated. This structure creates a
simple credit-assignment setting where the final error may arise from any
intermediate module or from the output module. 
Given this workflow graph, let
$\Pi_{\Phi} = \{\pi_v\}_{v\in\mathcal{V}}$ denote the prompt set of the overall
system. For a textual input $x$, graph execution induces the final system output
$y = \Phi(x;\Pi_{\Phi})$ following Algorithm~\ref{alg:graph_forward}. For an
input--reference pair $(x,y^*)$ sampled from the task distribution
$\mathcal{T}$, let $\ell(y,y^*)$ denote an output-level loss measuring the
discrepancy between the system output and the reference. The context adaptation
objective is then
\begin{equation}
\Pi_{\Phi}^{*}
=
\argmin_{\Pi_{\Phi}}
\mathbb{E}_{(x,y^*)\sim\mathcal{T}}
\left[
\ell\!\left(
\Phi(x;\Pi_{\Phi}), y^*
\right)
\right].
\label{eq:prompt_optimization}
\end{equation}
In our experiments, $\ell$ is instantiated using the evaluation metric
associated with each dataset $\mathcal{T}$.

\section{Proposed Method}
This section presents \textbf{\underline{G}}raph-based
\textbf{\underline{T}}arget \textbf{\underline{B}}ack-\textbf{\underline{P}}ropagation
(\textbf{GTBP}), which uses LLM-guided target propagation to trace
output-level feedback backward through the agentic workflow and assign
localized optimization signals to prompt-tunable sub-modules. We then introduce
a stage-wise mechanism for updating the corresponding prompts. An overview of GTBP is included in Figure~\ref{fig:overview of gtbp}.


\subsection{LLM-Guided Target Propagation}
\begin{algorithm}[t]
\caption{Target Propagation in GTBP}
\label{alg:gtbp_backward}
\small
\begin{algorithmic}[1]
\Require Graph $\mathcal{G}=(\mathcal{V},\mathcal{E})$, cached node outputs
$\{O_v\}_{v\in\mathcal{V}}$, output target
$\hat{O}_{v_{\mathrm{out}}}$, local contexts $\{c_v\}_{v\in\mathcal{V}}$
\Ensure Local targets $\{\hat{O}_v\}_{v\in\mathcal{V}}$

\For{each node $v \in \mathcal{V}$ in reverse topological order}
    \State Compute $\hat{I}_{v}$ using Eq.~\ref{eq:llm_backward_inference}
    \For{each node $u$ such that $(u,v)\in\mathcal{E}$}
        \State Get $\hat{O}_{u}$ from $\hat{I}_{v}$
    \EndFor
\EndFor

\State \Return $\{\hat{O}_v\}_{v\in\mathcal{V}}$
\end{algorithmic}
\end{algorithm}
Based on the graph-based workflow, GTBP adapts the idea of target
propagation to assign credit to LLM-based modules in agentic workflows. Given an input--reference pair
$(x,y^*)\in\mathcal{T}$, the graph $\mathcal{G} = (\mathcal{V}, \mathcal{E})$ produces node outputs
$\{O_v\}_{v\in\mathcal{V}}$ including the final output $O_{v_{\mathrm{out}}}$ following Algorithm \ref{alg:graph_forward}.
Let the expected output for each node $v\in\mathcal{V}$ be $\hat{O}_v$. Target propagation starts by assigning the reference output $y^*$ as the target of the
final output $\hat{O}_{v_{\mathrm{out}}}$.


The goal is then to propagate this output-level target backward through the
graph and infer local targets for the other nodes. For a node \(u\), its output
\(O_u\) may be consumed by multiple successor nodes. For each successor node
\(v\) with \((u,v)\in\mathcal{E}\), GTBP infers a target input for node \(v\)
by solving
\begin{equation}
\label{eq:target_message_optimization}
\argmin_{I}\left[\ell\left(M_v\!\left(I;\pi_v\right),\hat{O}_v\right)+\lambda \ell(I,I_v)\right],
\end{equation}
where \(I_v\) denotes the original input to node \(v\), and \(\hat I_v\)
denotes the inferred target input for node \(v\). The first term encourages
node \(v\) to produce target outputs for its predecessor nodes $u$ with $(u,v)\in\mathcal{E}$, while the second term
regularizes the inferred target input to remain close to the original input
\(I_v\). The coefficient \(\lambda>0\) controls this locality constraint.
\begin{remark}
In a general directed graph, a node may branch to multiple successors, which can lead to multiple backward-inferred targets for the same node. This ambiguity does not occur in our one-hidden-layer workflow. As defined in Eqs.~\ref{first_layer} and~\ref{second_layer}, each intermediate node connects only to the output node, and therefore receives a single target.
\end{remark}


However, Eq.~\ref{eq:target_message_optimization} is generally not analytically solvable
in LLM-based agentic systems, since sub-module $M_v$ might be a black-box LLM
call, and inputs and outputs within the system are text-based and non-differentiable. To address this problem, GTBP
approximates the target output using an LLM-guided backward
operator:
\begin{equation}
\label{eq:llm_backward_inference}
\hat{I}_v
=
\mathrm{LLM}\!\left(
P_{\mathrm{back}}(\hat{O}_v, I_v, c_v)
\right),
\end{equation}
where $P_{\mathrm{back}}(\cdot)$ is a backward inference prompt template and
$c_v$ denotes the local context of node $v$, including its role description,
system prompt, and task constraints. A concrete example of such a backward inference prompt is shown in Fig. \ref{fig:backward_prompt_template}.
\begin{figure}[t]
\centering
\small
\setlength{\fboxsep}{5pt}
\fbox{
\begin{minipage}{0.94\linewidth}
\textbf{Prompt template $P_{\mathrm{back}}$}

\vspace{2pt}
\textbf{Input:}
system prompt of node $v$, $c_v$;
original input of node $v$, $I_v$;
target output of node $v$, $\hat{O}_v$.

\vspace{2pt}
\textbf{Instruction:}
Generate a target input $\hat I_v$ for node $v$ such that node $v$ is more
likely to produce the target output $\hat O_v$ when processing $\hat I_v$ with
its current prompt. The generated target input should remain close in meaning
and structure to the original input $I_v$. 
\end{minipage}
}
\caption{Backward target-inference prompt template used to infer the target
input for a downstream node.}
\label{fig:backward_prompt_template}
\end{figure}
As shown in Algorithm \ref{alg:gtbp_backward}, by applying Eq.~\ref{eq:llm_backward_inference} from the output node to earlier
nodes in reverse topological order, GTBP generates targets for each sub-module node in the agentic workflow. 
\begin{algorithm}
\caption{Prompt Update for Tunable Prompts}
\label{alg:prompt_update}
\small
\begin{algorithmic}[1]
\Require Node $v$, current prompt
$\pi_v=\langle c_{v,1},\dots,c_{v,m}\rangle$,
module-level update set
$\mathcal{S}_v=\{(I_v^{(i)},O_v^{(i)},\hat{O}_v^{(i)})\}_{i=1}^{N}$,
update budget $n$, claim-addition marker $a_v\in\{0,1\}$
\Ensure Updated prompt $\pi_v$

\State Compute discrepancies
$\{\Delta(O_v^{(i)},\hat{O}_v^{(i)})\}_{i=1}^{N}$

\If{$a_v=1$}
    \State Add one new prompt claim to $\pi_v$
    \State Set $n \leftarrow n-1$
\EndIf

\State Select at most $n$ existing prompt entries from
$\langle c_{v,1},\dots,c_{v,m}\rangle$ for revision

\State Update the selected prompt entries in $\pi_v$ using Eq.~\ref{eq:prompt_optimizer} with
$\mathcal{S}_v$ and discrepancies
$a_v$

\State \Return Updated $\pi_v$
\end{algorithmic}
\end{algorithm}
\vspace*{-4mm}
\subsection{Stage-wise Prompt Updating}
Before introducing the proposed prompt update pipeline, we first specify the structure of the tunable prompt. The prompt associated with LLM module $v$ in the agentic workflow is formatted as an
ordered list of textual claims $\pi_v=\langle c_{v,1}, \dots, c_{v,m} \rangle$,
where $m$ denotes the number of prompt claims.

For each $(x,y^*)\in\mathcal{T}$, the target output $\hat{O}_v$ for each
sub-module $v$ in the agentic workflow $\mathcal{G}$ can be obtained by the
target propagation procedure described in Algorithm~\ref{alg:gtbp_backward}.
Given the input $I_v$ and the target output $\hat{O}_v$, the sub-module-level
prompt update for node $v$ can be expressed as
\begin{equation}
\pi_v^{\mathrm{opt}}
:=
\argmin_{\pi_v}
\mathbb{E}_{(x,y^*)\sim\mathcal{T}} \;
\ell\!\left(
M_v(I_v;\pi_v), \hat{O}_v
\right).
\label{eq:prompt_optimization_local_target}
\end{equation}
In practice, we approximate this objective using an LLM-based optimizer on a
mini-batch sampled from the task distribution:
\begin{equation}
\label{eq:prompt_optimizer}
\pi_v^{\mathrm{LLM}}
:=
\mathrm{LLM}\!\left(
P^{m,n}_{\mathrm{optimize}}(\pi_{v},\mathcal{S}_v(\mathcal{B}))
\right),
\end{equation}
where $\mathcal{B}\subseteq\mathcal{T}$ denotes a finite mini-batch sampled from
the task distribution, 
$\mathcal{S}_v(\mathcal{B})
=
\{(I_v^{(i)},O_v^{(i)},\hat{O}_v^{(i)})\}_{i\in\mathcal{B}}$
is the module-level update set for node $v$, and
$P_{\mathrm{optimize}}$ is the prompt optimization template for the LLM.
\begin{figure}[t]
\centering
\small
\setlength{\fboxsep}{5pt}
\fbox{
\begin{minipage}{0.94\linewidth}
\textbf{Prompt template $P_{\mathrm{optimize}}$}

\vspace{2pt}
\textbf{Input:}
current prompt $\pi_v$; local update set $\mathcal{S}_v(\mathcal{B})$;
edit budget $n$; claim-addition marker $a_v\in\{0,1\}$.

\vspace{2pt}
\textbf{Instruction:}
Given the examples in $\mathcal{S}_v(\mathcal{B})$, revise the current prompt
$\pi_v$ so that module $v$ is more likely to produce outputs close to the
target outputs $\hat{O}_v^{(i)}$ for inputs $I_v^{(i)}$.
Use the current outputs $O_v^{(i)}$ to identify the gap between current and
desired behavior. If $a_v=1$, add one new claim to the prompt and use one unit
of the edit budget. Then update no more than $n-a_v$ existing claims.
\end{minipage}
}
\caption{Prompt optimization template used by the stage-wise LLM optimizer.}
\label{fig:prompt_optimize_template}
\end{figure}
\subsection{Graph-based Target Back-Propogation}
\label{sec:llm_guided_target_propagation}

As shown in Algorithm \ref{alg:gtbp_minibatch}, GTBP performs context adaptation by repeatedly applying LLM-guided target propagation to mini-batches sampled from the task distribution and updating the tunable prompts accordingly. At each
iteration, a finite mini-batch
$\mathcal{B}=\{(x^{(i)},y^{*(i)})\}_{i=1}^{N}$ is sampled from the data distribution $\mathcal{T}$.
For each example, the agentic workflow is executed in the to cache node-level inputs and outputs following Algorithm \ref{alg:graph_forward}. 
Then the local targets for sub-modules of the agentic workflow are inferred following Algorithm \ref{alg:gtbp_backward}. Finally, the inputs, cached outputs, and inferred targets are collected across the mini-batch and used to update the tunable prompts following Algorithm~\ref{alg:prompt_update}.

GTBP further controls prompt updates through two stage-wise scheduling
parameters. Let \(m^{0}\) denote the initial edit budget, i.e., the initial
number of prompt claims allowed to be updated. Let \(m^{+}\) denote the
claim-addition interval, meaning that a new prompt claim is added after every
\(m^{+}\) iterations. Let \(m^{-}\) denote the edit-budget decay interval: after every \(m^{-}\)
iterations, the number of editable prompt claims is reduced by one, but never
below one.
\begin{algorithm}[t]
\caption{Graph-based Target Propagation}
\label{alg:gtbp_minibatch}
\small
\begin{algorithmic}[1]
\Require Task distribution $\mathcal{T}$, iterations $T$, batch size $N$,
workflow $\mathcal{G}$, tunable nodes $\mathcal{V}$,
initial prompts $\Pi_{\Phi}^{(0)}$, initial edit budget $n_0$,
claim-addition interval $m^{+}$, edit-budget decay interval $m^{-}$
\Ensure Updated prompts $\Pi_{\Phi}^{(T)}$

\For{$t=1,\ldots,T$}
    \State Sample mini-batch $\mathcal{B}_t \sim \mathcal{T}$
    \For{$(x^{(i)},y^{*(i)})\in\mathcal{B}_t$}
        \State Run forward pass on $\mathcal{G}$ by Algorithm~\ref{alg:graph_forward}
        \State Cache the corresponding $\{(I_v^{(i)},O_v^{(i)})\}_{v\in\mathcal{V}}$
        \State Set $\hat{O}_{v_{\mathrm{out}}}^{(i)} \gets y^{*(i)}$
        \State Infer local targets by Algorithm~\ref{alg:gtbp_backward}
    \EndFor

    \State Set edit budget $n_t$ according to $m^{-}$ and claim-addition marker $a_t \gets 1$ if $t$ is divisible by $m^{+}$, and $a_t \gets 0$ otherwise

    \For{$v\in\mathcal{V}$}
        \State Update $\pi_v^{(t-1)}$ to $\pi_v^{(t)}$ by
        Algorithm~\ref{alg:prompt_update} with edit budget $n_t$ and
        claim-addition marker $a_t$
    \EndFor
\EndFor

\State \Return $\Pi_{\Phi}^{(T)}$
\end{algorithmic}
\end{algorithm}
\section{Theoretical Analysis}
Theoretical analysis of GTBP is challenging because practical LLM-based
agent modules are deep, sequence-level, and text-based. Therefore, we first
introduce a simplified theoretical setting. Under this setting, we analyze two
properties. First, the change of the overall objective in
Eq.~\ref{eq:intro_prompt_optimization} converges to zero as the number of
iterations increases. Second, the prompt-update pipeline induces an
optimization effect similar to gradient descent on trainable weights.
Together, these results explain the stability and descent behavior of the
proposed method. For readability, proofs are deferred to Appendix \ref{sec:appendix}.

\subsection{Simplified Theoretical Setting}
\label{sec:simplified_theory}
\noindent\textbf{Single-token setting.}
We restrict the input-reference pair \((x,y^*)\) in
\(\mathcal{T}\) to a single-token setting. This avoids sequence-level notation
and makes the following loss and stability analysis easier to state.

\noindent\textbf{Prompt-input concatenation.}
For each node \(v\) in the agentic workflow \(\mathcal{G}\), we assume that the node input \(I_v\) and prompt \(\pi_v\) are directly concatenated before being processed by the LLM module, without applying an additional prompt
template.

\noindent\textbf{Single-block module approximation.}
We view each LLM module as a single transformer block:
\[
M_v(I_v;\pi_v)
=
H_{W_v}\!\left(A_v(I_v,\pi_v)\right),
\]
where \(A_v\) maps the concatenated input and prompt to a contextual hidden
representation, and \(H_{W_v}\) denotes the linear transformation with weight \(W_v\).
\subsection{Stability of Prompt Updates}
\label{sec:implicit-weight-bound}

The convergence analysis builds on the implicit weight theorem in
\citet{dherin2025learning}, which shows that changing the input text of a
transformer block can be equivalently represented as a weight update without input text changes.
\begin{lemma}[Implicit Weight]
\label{lem:implicit_patch}
For a transformer block with frozen parameters, let \(t\) denote the update
iteration of GTBP. For a fixed node input \(I_v\), changing the prompt from
\(\pi_v^{(t-1)}\) to \(\pi_v^{(t)}\) can be interpreted as inducing an
input-dependent effective inference-time weight update
\(\Delta W_{v,I_v}^{(t)}\), i.e.,
\begin{equation*}
M_{v,W_v}\!\left(I_v;\pi_v^{(t)}\right)
=
M_{v,W_v+\Delta W_{v,I_v}^{(t)}}\!\left(I_v;\pi_v^{(t-1)}\right).
\end{equation*}
\end{lemma}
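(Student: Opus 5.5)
The plan is to reduce the claim to the implicit weight theorem of \citet{dherin2025learning}, using the single-block module approximation of Section~\ref{sec:simplified_theory}. Under prompt--input concatenation and the single-token setting, we write
\[
M_{v,W_v}(I_v;\pi_v) = W_v\, A_v(I_v,\pi_v),
\]
where $A_v(I_v,\pi_v)$ is the contextual attention output at the query position (the token of $I_v$) when the context also contains the prompt tokens. The only object that depends on the prompt is the vector $a^{(t)} := A_v(I_v,\pi_v^{(t)})$; the frozen linear map $W_v$ acts on it afterwards. It therefore suffices to find a matrix $\Delta W$ with
\[
(W_v+\Delta W)\,a^{(t-1)} = W_v\, a^{(t)}.
\]

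The construction is the rank-one patch of \citet{dherin2025learning}. Set $\delta := a^{(t)} - a^{(t-1)}$. When $a^{(t-1)}\neq 0$, define
\[
\Delta W_{v,I_v}^{(t)} := \frac{W_v\,\delta\,(a^{(t-1)})^{\top}}{\|a^{(t-1)}\|^2}.
\]
Expanding the product, the outer-product factor $(a^{(t-1)})^{\top}a^{(t-1)}/\|a^{(t-1)}\|^2$ equals $1$. This gives $(W_v+\Delta W)a^{(t-1)} = W_v a^{(t-1)} + W_v\delta = W_v a^{(t)}$, which is the identity in Lemma~\ref{lem:implicit_patch}.

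The patch is built from $a^{(t-1)}$ and $a^{(t)}$, and both depend on $I_v$ through attention. Hence $\Delta W$ is input-dependent, which is why the statement indexes it by $I_v$. This matches the phrase ``for a fixed node input'' in the lemma.

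Two points need care. First, if the block includes a residual connection or a nonlinearity after $W_v$ (for example an MLP), the patch must still be applied to the first linear layer acting on $A_v$. In that case the downstream computation is a fixed function of $W_v A_v$, so equality of the pre-activation vectors suffices. I would state explicitly that $H_{W_v}$ depends on its argument only through $W_v A_v$ plus terms that are fixed once $I_v$ is fixed. A residual term that depends on $a$ would break this. To handle that case I would either absorb the residual into $H$ and note that the single-token residual is the input embedding of $I_v$ (which is unchanged), or appeal directly to the corresponding statement in \citet{dherin2025learning}.

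Second, there is the degenerate case $a^{(t-1)}=0$. Here I would either assume it away, since it occurs for generic inputs with measure zero, or handle it by also patching the bias term.

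I expect the main obstacle to be stating precisely what ``single transformer block'' means, so that the prompt influences the output only through the attention vector $A_v$. Once that modeling assumption is fixed, the algebra above is immediate.
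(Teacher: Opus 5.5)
Your proposal is correct and matches the paper's treatment. The paper gives no separate proof. Instead, its full appendix version (Eq.~\ref{delta_def}) invokes \citet{dherin2025learning} with exactly your rank-one patch $\Delta W_v^{(t)} = W_v\,\delta A_v\,A_v(\pi_v^{(t-1)})^\top/\|A_v(\pi_v^{(t-1)})\|_2^2$, and this is verified by the one-line computation you give. In the paper's simplified setting $H_{W_v}$ is a pure linear map, so your residual and nonlinearity caveats are not needed there, and the nonvanishing of $A_v(I_v,\pi_v^{(t-1)})$ that you flag is guaranteed by Assumption~\ref{ass:nondegeneracy}.
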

\begin{remark}
To save space, we present a simplified version of
Lemma~\ref{lem:implicit_patch} in the main text, while the full statement is
provided in Appendix~\ref{full_lemma1}. The full version is used in the following proofs of theorems, but
is not needed for the main statement.
\end{remark}
Motivated by Theorem~3.2 of \citet{li2023transformers}, the effect of a
single prompt perturbation decreases inversely with prompt length. Since GTBP
accumulates prompt claims over iterations and eventually updates only one
claim at a time, we abstract this behavior into the following stability assumption to upper bound the implicit weight update.

\begin{assumption}[Prompt-length stability]
\label{ass:length_normalized_context}
For each node \(v\) in the agentic workflow \(\mathcal{G}\), 
the induced change in the contextual representation satisfies
\[
\left\|
A_v(I_v,\pi_v^{(t)})
-
A_v(I_v,\pi_v^{(t-1)})
\right\|_2
=
\mathcal{O}(t^{-1}).
\]
\end{assumption}

In Theorem~\ref{lem:edit_budget_bound}, we show that, under Assumption~\ref{ass:nondegeneracy}, the implicit weight increment $\Delta W_{v,I_v}^{(t)}$ whose explicit form is given in Eq.~\ref{eq:implicit_weight_equivalence} can be upper bounded in terms of the update iteration $t$ in GTBP.

\begin{assumption}
\label{ass:nondegeneracy}
There exists $\gamma_v > 0$ such that
\[
\|A_v(I_v,\pi_v^{(t-1)})\|_2 \ge \gamma_v.
\]
\end{assumption}

\begin{theorem}
\label{lem:edit_budget_bound}
With Assumptions~\ref{ass:length_normalized_context} and \ref{ass:nondegeneracy}, the implicit weight update in Lemma \ref{lem:implicit_patch} can be bounded by 
\begin{equation}
\|\Delta W_{v,I_v}^{(t)}\|_F
=
\mathcal{O}\!\left(t^{-1}\right).
\end{equation}
\end{theorem}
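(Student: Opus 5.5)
The plan is to make the implicit weight increment explicit, following the construction of \citet{dherin2025learning} that underlies Lemma~\ref{lem:implicit_patch}, and then bound its Frobenius norm directly. Write $a^{(t)} := A_v(I_v,\pi_v^{(t)})$ and $a^{(t-1)} := A_v(I_v,\pi_v^{(t-1)})$, and let $\delta^{(t)} := a^{(t)} - a^{(t-1)}$. Under the single-block module approximation, $M_v(I_v;\pi_v) = H_{W_v}(A_v(I_v,\pi_v)) = W_v A_v(I_v,\pi_v)$, up to the affine or residual parts of the block, which are frozen and do not affect the argument. So the identity in Lemma~\ref{lem:implicit_patch} requires
\[
(W_v+\Delta W_{v,I_v}^{(t)})\,a^{(t-1)} = W_v\,a^{(t)} .
\]
The rank-one choice used by \citet{dherin2025learning}, which is the form referenced as Eq.~\ref{eq:implicit_weight_equivalence},
\begin{equation*}
\Delta W_{v,I_v}^{(t)} = \frac{W_v\,\delta^{(t)}\,(a^{(t-1)})^{\top}}{\|a^{(t-1)}\|_2^{2}},
\end{equation*}
satisfies this identity, since right-multiplying by $a^{(t-1)}$ gives exactly $W_v\delta^{(t)}$. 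The first step is to recall this form from the full statement of the lemma in Appendix~\ref{full_lemma1} and verify the identity.

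The second step is the norm computation. The matrix is an outer product $uw^{\top}$, so its Frobenius norm equals $\|u\|_2\|w\|_2$. This gives
\[
\|\Delta W_{v,I_v}^{(t)}\|_F = \frac{\|W_v\delta^{(t)}\|_2\,\|a^{(t-1)}\|_2}{\|a^{(t-1)}\|_2^2} \le \frac{\|W_v\|_2\,\|\delta^{(t)}\|_2}{\|a^{(t-1)}\|_2}.
\]
Next, apply Assumption~\ref{ass:nondegeneracy} to the denominator, which yields $\|a^{(t-1)}\|_2 \ge \gamma_v$. Apply Assumption~\ref{ass:length_normalized_context} to the numerator, which yields $\|\delta^{(t)}\|_2 \le C_v t^{-1}$ for large $t$. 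Since $W_v$ is frozen, $\|W_v\|_2$ is a fixed constant. Together these give $\|\Delta W_{v,I_v}^{(t)}\|_F \le \|W_v\|_2 C_v/(\gamma_v t) = \mathcal{O}(t^{-1})$.

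The main obstacle I expect is pinning down exactly which $\Delta W$ the full lemma supplies. Any solution of the underdetermined linear equation works for Lemma~\ref{lem:implicit_patch}, but only the minimum-norm rank-one choice above yields the clean bound. If the appendix instead states the update for a block with a residual connection, or for the MLP weight acting on the attention output, I would carry the same outer-product argument through that layer. The frozen operator norm then absorbs any fixed linear factor, and the key step is still the ratio $\|\delta^{(t)}\|_2/\|a^{(t-1)}\|_2$. I would also note that the constants $C_v$ and $\gamma_v$ are taken uniform over the fixed input $I_v$, so the $\mathcal{O}$ constant does not depend on $t$.
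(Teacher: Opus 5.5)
Your proposal is correct and follows essentially the same route as the paper. Both take the explicit rank-one form of $\Delta W_v^{(t)}$ from the full lemma, apply $\|uw^\top\|_F=\|u\|_2\|w\|_2$ and $\|W_v\delta\|_2\le\|W_v\|_2\|\delta\|_2$, bound the numerator via Assumption~\ref{ass:length_normalized_context} and the denominator via Assumption~\ref{ass:nondegeneracy}, and treat $\|W_v\|_2$ as a constant because $W_v$ is frozen; your verification of the lemma's identity and your remark on uniform constants are harmless additions the paper omits.
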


Theorem~\ref{lem:edit_budget_bound} shows that the implicit weight update
induced by a stage-wise prompt update becomes smaller as the iteration number
increases. To connect this update stability with the corresponding local objective in Eq.~\ref{eq:prompt_optimizer}, we show in Theorem~\ref{thm:vanishing_loss_difference} that the change of the objective is also
upper bounded.

\begin{assumption}[Per-sample regularity]
\label{ass:per_sample_regularity}
The per-sample local target loss $\ell\!\left(
M_{v}(I_v;\pi_v^{(t)}), \hat O_v
\right)$ in Eq.~\ref{eq:prompt_optimization_local_target} is second-order smooth around \(W_v\), and its gradient
with respect to \(W\) is also bounded along the path from \(W_v\) to
\(W_v+\Delta W_{v,I_v}^{(t)}\).
\end{assumption}
\begin{theorem}
\label{thm:vanishing_loss_difference}
Under Assumptions~\ref{ass:length_normalized_context} to~\ref{ass:per_sample_regularity}, 
if GTBP updates the prompt from $\pi_v^{(t-1)}$ to
$\pi_v^{(t)}$, then the objective change can be upper bounded by 
\begin{equation*}
| F_v(\pi^{(t)}_v) - F_v(\pi^{(t-1)}_v) | = \mathcal{O}(t^{-1}),
\end{equation*}
where $F_v(\cdot)$ is the local target objective in Eq. \ref{eq:prompt_optimization_local_target}:
\begin{equation*}
F_v(\pi_v)=\mathbb{E}_{(x,y^*)\sim\mathcal{T}} \;
\ell\!\left(
M_v(I_v;\pi_v), \hat{O}_v
\right).
\end{equation*}
\end{theorem}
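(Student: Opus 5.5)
The plan is to reduce the objective change to a per-sample loss change, then to a change in the weights, using Lemma~\ref{lem:implicit_patch} and Theorem~\ref{lem:edit_budget_bound}.

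\textbf{Step 1: move to weight space.} Fix a sample $(x,y^*)$ and write $I_v$ and $\hat O_v$ for the corresponding node input and local target. By Lemma~\ref{lem:implicit_patch},
\[
\ell\!\left(M_{v,W_v}(I_v;\pi_v^{(t)}),\hat O_v\right)=\ell\!\left(M_{v,W_v+\Delta W_{v,I_v}^{(t)}}(I_v;\pi_v^{(t-1)}),\hat O_v\right).
\]
Define $g(W):=\ell\!\left(M_{v,W}(I_v;\pi_v^{(t-1)}),\hat O_v\right)$. Then the per-sample loss difference equals $g(W_v+\Delta W_{v,I_v}^{(t)})-g(W_v)$. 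This is now a function of the weights alone, with the prompt held fixed.

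\textbf{Step 2: mean-value bound.} By Assumption~\ref{ass:per_sample_regularity}, $g$ is smooth and its gradient is bounded along the segment from $W_v$ to $W_v+\Delta W_{v,I_v}^{(t)}$. Writing the difference as an integral of the gradient along this segment and applying Cauchy--Schwarz in the Frobenius inner product gives
\[
\left|g(W_v+\Delta W_{v,I_v}^{(t)})-g(W_v)\right|\le G_v\,\|\Delta W_{v,I_v}^{(t)}\|_F,
\]
where $G_v$ is the gradient bound. Alternatively, a second-order Taylor expansion gives the same bound plus a term $\tfrac{L_v}{2}\|\Delta W\|_F^2=\mathcal{O}(t^{-2})$. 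Theorem~\ref{lem:edit_budget_bound} then makes the right-hand side $\mathcal{O}(t^{-1})$.

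\textbf{Step 3: take expectations.} Since $|\mathbb{E}[Z]|\le\mathbb{E}|Z|$, it remains to average the per-sample bound over $(x,y^*)\sim\mathcal{T}$. This is the main obstacle, because the $\mathcal{O}(t^{-1})$ constants must be uniform over inputs:
\begin{itemize}
\item the constant in Assumption~\ref{ass:length_normalized_context};
\item $\gamma_v$ in Assumption~\ref{ass:nondegeneracy};
\item the gradient bound $G_v$.
\end{itemize}
All three depend on $I_v$, and $I_v$ itself depends on $x$ and on the upstream prompts.

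I would handle this by reading the assumptions as holding uniformly over the support of $\mathcal{T}$. The single-token setting makes this natural, since inputs then range over a finite vocabulary, so a maximum over finitely many constants exists. Under that reading, the explicit form from the full Lemma~\ref{lem:implicit_patch}, roughly $\|\Delta W\|_F\le \|W_v\|\,\|\Delta A\|_2/\|A\|_2$, gives a sup-bound $C_v/t$ with $C_v$ independent of the sample. Failing full uniformity, it suffices that these constants are integrable and the bound is pointwise; dominated integration then still yields $\mathcal{O}(t^{-1})$.

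One further subtlety should be noted. The targets $\hat O_v$, and for the output node the inputs $I_v$, are held fixed when comparing $\pi_v^{(t)}$ with $\pi_v^{(t-1)}$, as the definition of $F_v$ requires. So the bound concerns only the change in $\pi_v$, with all other prompts frozen. Combining Steps 1--3 gives $|F_v(\pi_v^{(t)})-F_v(\pi_v^{(t-1)})|\le G_v C_v t^{-1}=\mathcal{O}(t^{-1})$.
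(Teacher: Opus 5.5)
Your proposal is correct and follows essentially the same route as the paper. Lemma~\ref{lem:implicit_patch} moves the prompt change into weight space, and the per-sample difference is bounded by $G_v\|\Delta W_{v,I_v}^{(t)}\|_F$; the paper uses the second-order Taylor form with the extra $\tfrac{L_v}{2}\|\Delta W_{v,I_v}^{(t)}\|_F^2$ term, which you also mention. Theorem~\ref{lem:edit_budget_bound} then gives $\mathcal{O}(t^{-1})$, and $|\mathbb{E}Z|\le\mathbb{E}|Z|$ finishes the argument. Your Step~3 discussion of the uniformity of the constants over samples makes explicit an assumption the paper uses silently when it passes from the per-sample bound to $\mathbb{E}\bigl[\|\Delta W_{v,I_v}^{(t)}\|_F\bigr]=\mathcal{O}(t^{-1})$.
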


\subsection{Gradient-Aligned Implicit Updates}

Theorem~\ref{thm:vanishing_loss_difference} shows that the local target
optimization updates in GTBP become stable as training proceeds, but stability
alone does not explain why and how the objective can decrease. Therefore, this
subsection studies whether the LLM optimizer can decrease the objective in Eq.~\ref{eq:prompt_optimization} sufficiently well.

GTBP uses LLMs for both backward target inference and prompt updates. Since we
focus here on the effect of prompt updates on the objective, we simplify the
analysis by assuming that the backward operator recovers local targets for
upstream modules, as stated in
Assumption~\ref{ass:backward_optimization_capability}.

\begin{assumption}
\label{ass:backward_optimization_capability}
For each edge $(u,v)\in\mathcal{E}$, the local target inference problem in
Eq.~\ref{eq:target_message_optimization} can be approximately solved by the
LLM-guided backward operator, i.e., with LLM-inferred target output $\hat{O}_v$ and $\hat{O}_u$, we have $\hat{O}_v=M_v\!\big( I_v^{u \leftarrow \hat{O}_u};\pi_v \big)$.
\end{assumption}


Assumption~\ref{ass:backward_optimization_capability} simplifies the backward step by assuming that the LLM can infer local targets consistent with the downstream target. The remaining question is whether the prompt-update step can
solve the local target optimization problem well enough to reduce the overall objective. To prove this, we define the LLM's optimization capability.

Let $\epsilon_{\mathrm{opt},v}$ denote the token-level edit distance between $\pi_v^{\mathrm{LLM}}$ (cf. Eq.~\ref{eq:prompt_optimization_local_target}) and $\pi_v^{\mathrm{opt}}$ (cf. Eq.~\ref{eq:prompt_optimizer}).
A smaller $\epsilon_{\mathrm{opt},v}$ means that the LLM is more capable of generating the prompt that optimizes the local target objective. Under this
condition, we can connect the local target optimization with a descent direction of the overall objective in Theorem \ref{thm:negative_inner_product}.

\begin{theorem}
\label{thm:negative_inner_product}
Under Assumptions~\ref{ass:length_normalized_context}-\ref{ass:backward_optimization_capability}, and for an LLM with sufficiently small $\epsilon_{\mathrm{opt},v}$, the overall loss gap $\big| \mathbb{E}_{(x,y^*)\sim\mathcal{T}}[\ell( \Phi(x;\Pi_{\Phi}^{(t)}), y^* )]-\min_{\Pi_{\Phi}}{\mathbb{E}_{(x,y^*)\sim\mathcal{T}}\left[\ell\!\left(\Phi(x;\Pi_{\Phi}), y^*\right)\right]} \big|$ is $\mathcal{O}(t^{-1})$.  
\end{theorem}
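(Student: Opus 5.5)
The plan is to split the overall loss gap into three parts: a part from approximating each local optimum, a part from the remaining local target mismatch, and a part from propagating that mismatch back to the task loss. Write $G(\Pi_\Phi)=\mathbb{E}_{(x,y^*)\sim\mathcal{T}}[\ell(\Phi(x;\Pi_\Phi),y^*)]$ and $G^*=\min_{\Pi_\Phi}G(\Pi_\Phi)$.

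\textbf{Step 1: reduce the global gap to local gaps.} Assumption~\ref{ass:backward_optimization_capability} gives that composing the upstream targets through $M_{\mathrm{out}}$ reproduces the output target $\hat O_{v_{\mathrm{out}}}=y^*$. Hence a prompt set that attains every local target objective $F_v$ from Eq.~\ref{eq:prompt_optimization_local_target} also attains the reference output. In the one-hidden-layer workflow of Eqs.~\ref{first_layer}--\ref{second_layer}, the dependence of the final output on each $O_k$ and on $\pi_{\mathrm{out}}$ passes through a single linear map $H_{W_{\mathrm{out}}}$. Together with the bounded-gradient part of Assumption~\ref{ass:per_sample_regularity}, this should yield a Lipschitz-type bound
\[
G(\Pi_\Phi^{(t)})-G^* \le C\sum_{v\in\mathcal{V}}\big(F_v(\pi_v^{(t)})-F_v(\pi_v^{\mathrm{opt}})\big).
\]
Here each local mismatch is converted into a perturbation of the input of the output node, and $C$ depends on the gradient bounds.

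\textbf{Step 2: control each local gap.} For each node I will write
\[
F_v(\pi_v^{(t)})-F_v(\pi_v^{\mathrm{opt}}) = \big[F_v(\pi_v^{(t)})-F_v(\pi_v^{\mathrm{LLM}})\big]+\big[F_v(\pi_v^{\mathrm{LLM}})-F_v(\pi_v^{\mathrm{opt}})\big].
\]
\begin{itemize}
\item The second bracket is an edit-distance perturbation of size $\epsilon_{\mathrm{opt},v}$. Under Assumption~\ref{ass:length_normalized_context}, at stage $t$ it moves $A_v$ by $\mathcal{O}(\epsilon_{\mathrm{opt},v}\,t^{-1})$. Then Lemma~\ref{lem:implicit_patch}, the argument of Theorem~\ref{lem:edit_budget_bound} (using Assumption~\ref{ass:nondegeneracy}), and the mean-value argument of Theorem~\ref{thm:vanishing_loss_difference} bound it by $\mathcal{O}(\epsilon_{\mathrm{opt},v}t^{-1})$. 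For sufficiently small $\epsilon_{\mathrm{opt},v}$ this is $\mathcal{O}(t^{-1})$.
\item The first bracket measures how far the iterate lags behind the fresh LLM proposal. By Theorem~\ref{thm:vanishing_loss_difference}, consecutive changes are $\mathcal{O}(t^{-1})$. I would argue that the step to $\pi_v^{(t)}$ is the LLM step itself, so $\pi_v^{(t)}=\pi_v^{\mathrm{LLM}}$ computed at iteration $t$, and this bracket vanishes up to mini-batch error.
\end{itemize}

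\textbf{Step 3: combine.} Summing over the finitely many nodes and inserting into Step 1 gives $G(\Pi^{(t)}_\Phi)-G^*=\mathcal{O}(t^{-1})$. The absolute value in the statement is harmless because $G\ge G^*$.

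\textbf{Main obstacle.} I expect Step 1 to be the hardest part: making rigorous that local near-optimality implies global near-optimality. Local optimality of each $F_v$ against fixed targets does not imply joint optimality when targets themselves depend on the current prompts. My first approach is to read Assumption~\ref{ass:backward_optimization_capability} strongly enough that the targets are global-minimizer-consistent. That is, the targets realize $y^*$ whenever it is attainable, so the minimizers of the $F_v$ jointly realize $G^*$. The mini-batch-versus-expectation gap I would simply absorb into the capability constant $\epsilon_{\mathrm{opt},v}$.
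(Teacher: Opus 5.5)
\textbf{Gap in Steps 1 and 2.} Your route compares the iterates with the local minimizers $\pi_v^{\mathrm{opt}}$. Both places where you use this comparison go beyond the stated assumptions.

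In Step~1 you need the joint local minimizers to realize $G^*$. Assumption~\ref{ass:backward_optimization_capability} only says that the inferred targets are consistent with the output prompt used in the backward pass, $y^*=M_{\mathrm{out}}(x,\hat O_1,\dots,\hat O_K;\pi_{\mathrm{out}}^{(t-1)})$. It says nothing about whether the minimizers of the upstream $F_k$ hit their targets, or about consistency with $\pi_{\mathrm{out}}^{\mathrm{opt}}$. So local near-optimality does not transfer to global near-optimality, as you suspected.

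More seriously, the $t^{-1}$ rate in your argument comes only from the claim that an edit of size $\epsilon_{\mathrm{opt},v}$ moves $A_v$ by $\mathcal{O}(\epsilon_{\mathrm{opt},v}t^{-1})$. Assumption~\ref{ass:length_normalized_context} controls only the change between consecutive iterates $\pi_v^{(t-1)}\to\pi_v^{(t)}$. It gives no per-token sensitivity bound between two arbitrary prompts, and $\pi_v^{\mathrm{opt}}$ is not an iterate. The informal motivation before that assumption suggests your reading, but the assumption as stated does not deliver it. Without that extra assumption, Lemma~\ref{lem:implicit_patch} and the argument of Theorem~\ref{lem:edit_budget_bound} give at best a bound depending on the fixed constant $\epsilon_{\mathrm{opt},v}$ but not on $t$, which is $\mathcal{O}(1)$. ``Sufficiently small'' cannot turn a $t$-independent quantity into a decaying one. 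Note also that you use only the mean-value mechanics of Theorem~\ref{thm:vanishing_loss_difference}, never its one-step stability conclusion; in the paper that conclusion is precisely what produces the rate.

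\textbf{How the paper avoids this.} The paper anchors on a different configuration: the target-consistent input $\hat I_{\mathrm{out}}=(x,\hat O_1,\dots,\hat O_K)$ together with the \emph{previous} output prompt $\pi_{\mathrm{out}}^{(t-1)}$. By Assumption~\ref{ass:backward_optimization_capability} this configuration outputs exactly $y^*$. Its loss is therefore the optimal value. The paper asserts equality; for a loss minimized at the reference, $\le G^*$ already suffices, since together with $G\ge G^*$ it sandwiches the gap. No minimizer ever has to be identified.

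Small $\epsilon_{\mathrm{opt},k}$ is used only qualitatively, to say that $I_{\mathrm{out}}^{(t)}$ is close to $\hat I_{\mathrm{out}}$. The actual system $(I_{\mathrm{out}}^{(t)},\pi_{\mathrm{out}}^{(t)})$ differs from the anchor in its input and by the one update $\pi_{\mathrm{out}}^{(t-1)}\to\pi_{\mathrm{out}}^{(t)}$. This difference is treated as a one-step perturbation. Write $F_{\mathrm{out}}(I,\pi)$ for the expected output-node loss against $y^*$ with input $I$ and prompt $\pi$. Assumption~\ref{ass:length_normalized_context} at the output node, together with Theorem~\ref{thm:vanishing_loss_difference}, then gives
\[
\bigl|G(\Pi_\Phi^{(t)})-F_{\mathrm{out}}(\hat I_{\mathrm{out}},\pi_{\mathrm{out}}^{(t-1)})\bigr|=\mathcal{O}(t^{-1}),
\]
which is the claim.

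The missing idea is to compare against the configuration that Assumption~\ref{ass:backward_optimization_capability} certifies to produce $y^*$, and to let one-step stability, rather than optimizer error, supply the rate. The paper's argument is itself informal, since it stretches Assumption~\ref{ass:length_normalized_context} to cover the change of the output node's input. It nonetheless needs neither of your two additional assumptions.
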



\section{Experiments}
\label{sec:experiments}
\subsection{Datasets and Evaluation Metrics}
We evaluate GTBP on three real-world datasets: SubPOP, HotpotQA, and LiveBench-Math.

\noindent\textbf{SubPOP} \cite{subpop-suh-etal-2025-language}.
SubPOP evaluates whether language models can predict survey response
distributions across population subgroups. Each instance consists of a survey
question, a target subgroup, and the corresponding ground-truth response
distribution. The dataset contains 3,229 training and 133 test questions, each
paired with 22 subgroups. We use Wasserstein distance (WD) to compare predicted
and ground-truth distributions.

\noindent\textbf{HotpotQA} \cite{yang-etal-2018-hotpotqa}. 
HotpotQA evaluates multi-hop question answering, where models must combine
evidence from multiple supporting documents to answer a question. We use the \emph{distractor} setting, in which each instance consists of a question, 2 gold paragraphs containing supporting facts, 8 distractor paragraphs, and a ground-truth answer. We randomly sample 700 questions from the training split for prompt adaptation and 1{,}000 questions from the validation split for evaluation. Following prior work, we report answer-level F1.

\noindent\textbf{LiveBench-Math}  \cite{white2025livebenchchallengingcontaminationlimitedllm}. LiveBench is a popular benchmark with objectively verifiable answers and deterministic graders. Its math subset, LiveBench-Math, contains high-school math-competition problems, symbolic-math problems, and olympiad proof-completion tasks. From the July 30, 2025 snapshot, we shuffle and split the questions into train, validation, and test sets. During evaluation, each held-out question is scored with LiveBench grader, and we report accuracy.

\subsection{Baselines and Hyperparameter Settings}
\label{sec:baselines}

We compare GTBP with three baseline methods under the same backbone model, GPT-4.1-mini:
\noindent\textbf{GTBP (proposed).}
Prompts in GTBP are automatically initialized from data rather than manual design. For each intermediate module $M_k$, we independently sample a random mini-batch and use an LLM to summarize its task patterns, reasoning skills, and output requirements from a distinct perspective. The resulting summary is used to initialize the prompt $\pi_k$ of module $M_k$. We then summarize the first-layer prompts $\{\pi_k\}_{k=1}^{K}$ to generate the output-module prompt $\pi_{\mathrm{out}}$, instructing the output module to integrate the intermediate responses.

We instantiate GTBP with the described workflow and tune the number of intermediate modules over $K\in\{1,2,4\}$. 
The mini-batch size and the number of iterations are both set as 15. For prompt update, the initial number of claims in each
prompt is set to 10, and both $m^+$ and $m^-$ are set to 4.

\noindent\textbf{GEPA.} We run GEPA using the DSPy implementation \cite{khattab2024dspy} with Pareto-based candidate selection and update skipping on perfect-scoring examples. For HotpotQA, we directly configure the optimizer hyperparameters with a reflection mini-batch size of 3, a metric-call budget of 1000.
For SubPOP, we use the \texttt{light} auto configuration with a reflection mini-batch size of 15, round-robin component selection, a merge phase with up to 5 merge invocations.

\noindent\textbf{Zero- and Few-shot.}
Zero-shot directly queries LLM with the task input. Few-shot adds randomly sampled training examples before the test query as
in-context demonstrations. For HotpotQA and LiveBench-Math, where inputs are
longer, we tune the number of examples over $\{1,3,5,10\}$. For SubPOP, we maintain consistency as \cite{subpop-suh-etal-2025-language} and keep the number of examples as 5.

\begin{table}[t]
\centering
\small
\setlength{\tabcolsep}{5pt}
\begin{tabular}{lccc}
\toprule
\textbf{Method} 
& \textbf{HotpotQA} 
& \textbf{LiveBench-Math} 
& \textbf{SubPOP} \\
\cmidrule(lr){2-2}
\cmidrule(lr){3-3}
\cmidrule(lr){4-4}
& \textbf{F1 $\uparrow$} 
& \textbf{Acc. $\uparrow$} 
& \textbf{WD $\downarrow$} \\
\midrule
Zero-shot  & 0.7535 & 0.6111 & 0.1261 \\
Few-shot   & 0.7659 & 0.5556 & 0.1212 \\
GEPA       & 0.7697     & 0.6413 & 0.1160 \\
GTBP       &\textbf{ 0.80}   & \textbf{0.6690} &  \textbf{0.1072} \\
\bottomrule
\end{tabular}
\caption{
Performance comparison across HotpotQA, LiveBench-Math, and SubPOP, each reported using its standard evaluation metric.
}
\label{tab:main_results}
\end{table}

\begin{table}[t]
\centering
\small
\setlength{\tabcolsep}{4pt}
\begin{tabular}{lcccccc}
\toprule
\textbf{Dataset}
& \multicolumn{3}{c}{\textbf{GEPA}}
& \multicolumn{3}{c}{\textbf{GTBP}} \\
\cmidrule(lr){2-4}
\cmidrule(lr){5-7}
& \textbf{\#Calls} & \textbf{In} & \textbf{Out}
& \textbf{\#Calls} & \textbf{In} & \textbf{Out} \\
\midrule
HotpotQA      & 1{,}080 & 2{,}374 & 36 & 2{,}122 & 9{,}826 & 335 \\
LiveBench & 1{,}807 & 2{,}071 & 1{,}594 & 747 & 3{,}580 & 932 \\
SubPOP      & 12,700 & 1,112 & 219 & 1,639 & 1,730  & 456\\
\bottomrule
\end{tabular}
\caption{
Training efficiency: total LLM calls per run and average input/output tokens per call.
}
\label{tab:training_efficiency}
\end{table}
\begin{table}[t]
\centering
\small
\setlength{\tabcolsep}{4pt}
\begin{tabular}{lccc}
\toprule
\textbf{Method} 
& \textbf{HotpotQA} 
& \textbf{LiveBench-Math} 
& \textbf{SubPOP} \\
\cmidrule(lr){2-2}
\cmidrule(lr){3-3}
\cmidrule(lr){4-4}
& \textbf{In / Out} 
& \textbf{In / Out} 
& \textbf{In / Out} \\
\midrule
Zero-shot & 1{,}629 / 11 & 364 / 1{,}741 & 169 / 26 \\
Few-shot  & 15{,}296 / 11 & 1{,}015 / 1{,}442 & 641 / 27 \\
GEPA      & 2{,}019 / 16 & 1{,}683 / 1{,}895 & 1,055 / 210 \\
GTBP      & 6{,}358 / 278 & 4{,}146 / 709 & 2,176 / 328 \\
\bottomrule
\end{tabular}
\caption{
Inference efficiency: average per-sample input/output token usage across all methods.}
\label{tab:inference_efficiency}
\end{table}
\subsection{Effectiveness and efficiency evaluation}
Table~\ref{tab:main_results} shows that GTBP achieves the best overall performance across all three benchmarks. Relative to the strongest baseline, GTBP improves HotpotQA F1 from 0.769 to 0.800 (+4.0\%), LiveBench-Math accuracy from 0.641 to 0.669 (+4.4\%), and reduces SubPOP WD from 0.116 to 0.107 (-7.8\%). 
These gains are consistent across both reasoning tasks (HotpotQA, LiveBench-Math) and distribution prediction (SubPOP), suggesting that explicitly structured, module-level credit signals lead to more effective prompt updates than global reflection-based approaches. 

Table~\ref{tab:training_efficiency} compares GEPA and GTBP on LLM calls per training run and average input/output tokens per call. On HotpotQA, GTBP issues more calls and produces more output tokens, reflecting the structured credit signals generated during backward target inference. On LiveBench-Math and SubPOP, GTBP requires far fewer calls than GEPA, demonstrating that module-level target propagation can be significantly more call-efficient than global candidate generation on tasks with higher per-call overhead. Overall, GTBP achieves comparable or lower training costs than GEPA while consistently delivering stronger task performance. 

Table~\ref{tab:inference_efficiency} reports per-sample inference token usage. GTBP incurs higher per-sample token usage than zero- and few-shot baselines, but remains in the same order of magnitude as GEPA. The higher output token count of GTBP indicates module-level reasoning. 
Since GTBP operates entirely in prompt space with frozen model weights, it is compatible with proprietary LLMs without requiring gradient access. The resulting prompts are human-readable, as illustrated by the SubPOP examples in App.~\ref{app:subpop_results}.

\subsection{Convergence Analysis}
\label{sec:convergence_analysis}
\begin{figure}
    \centering
    \includegraphics[width=1\linewidth]{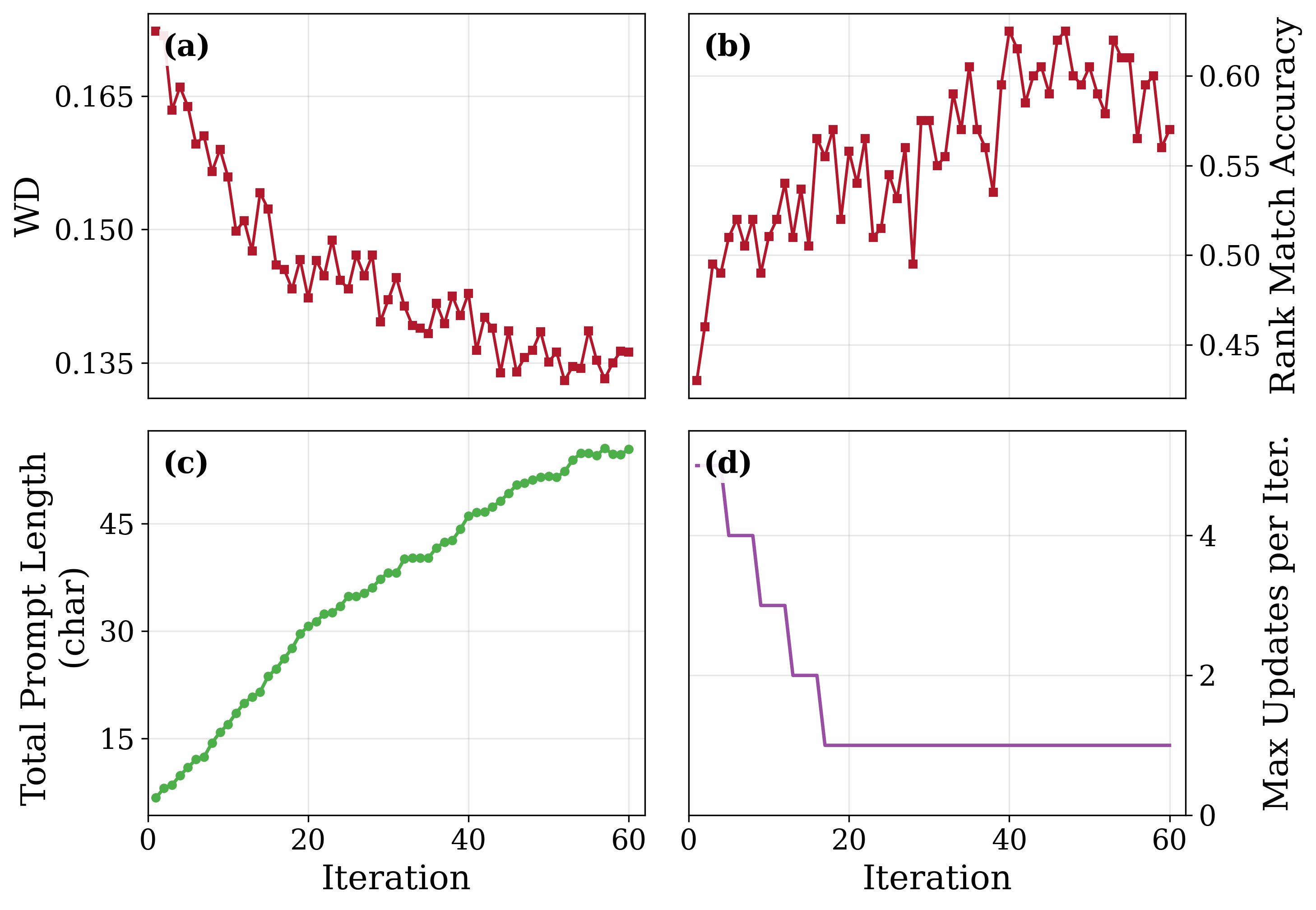}
    \caption{GTBP training convergence over 60 iterations. (a) Training Wasserstein distance. (b) Training rank match accuracy. (c) Total prompt length across network nodes. (d) Maximum update per iteration.}
    \label{fig:convergence_analysis}
\end{figure}

We validate empirically Theorem~\ref{thm:vanishing_loss_difference}. As training proceeds and prompt edits become smaller, the induced workflow change gradually stabilizes.

We analyze SubPOP with 200 samples from a single subgroup. We use SubPOP
because its target distributions are explicitly available, enabling rule-based
evaluation without an LLM judge. During GTBP training, we track WD,
rank-match accuracy, prompt length, and the maximum number of prompts updated
per iteration. Rank-match accuracy measures whether the ordering of predicted
options matches the ordering induced by the target distribution, providing a
discrete measure of distributional alignment in addition to WD.

Figure~\ref{fig:convergence_analysis} reveals that WD decreases and rank-match accuracy rises rapidly in the early iterations, after which both metrics gradually stabilize.
Meanwhile, prompt length grows monotonically while the update ratio decreases, indicating that GTBP accumulates structured knowledge through progressively smaller edits, promoting stable convergence.

\section{Conclusion and Future Work}

We propose GTBP, a graph-based target back-propagation framework for context
adaptation in multi-LLM agentic systems. GTBP assigns credit by inferring local
target outputs for intermediate modules and updates prompts using
module-level discrepancies, while keeping LLM weights frozen. Experiments on
reasoning and opinion prediction tasks show that GTBP improves over prompting
and prompt-optimization baselines, suggesting that target-based credit
assignment is useful for black-box LLM workflows. Future work will extend GTBP
to broader agentic abilities, such as memory retrieval, tool calling, and
environment interaction, as well as more complex workflow structures, including
deeper graphs and dynamic routing.

\section*{Limitations}

While GTBP shows consistent improvements across HotpotQA, LiveBench-Math, and SubPOP, several limitations remain and suggest directions for future work.

Our theoretical analysis relies on several assumptions to make the optimization
dynamics tractable, including bounded prompt edits and sufficiently accurate
LLM-based optimizer. In particular, the convergence analysis
assumes the existence of a sufficiently capable LLM optimizer that can produce
prompt updates closely aligned with the inferred local targets. While these
assumptions enable formal analysis of the proposed target-propagation
mechanism, extending the results to weaker assumptions and more general
workflow settings remains an important direction for future work.

Our proposed work focuses on the context-adaptation setting in which all LLM weights are frozen. In some settings, GTBP underperforms fine-tuned models, such as the fine-tuned Llama-3-70B results reported on SubPOP (Table~\ref{tab:demographic_results}). Fine-tuning typically relies on open-weight models and can achieve strong performance on individual benchmarks after training on task-specific datasets, potentially at the cost of broader task generalization. GTBP instead aims to provide a more general adaptation
framework while keeping model weights frozen.

The agentic workflow we evaluate is a simplified instance of what is possible in practice. Our experiments instantiate GTBP on a one-hidden-layer workflow with up to four intermediate modules and an output node (Section~\ref{sec:baselines}). The proposed structure avoids the multi-target ambiguity that arises when a node branches to multiple successors, and it does not capture deeper graphs, tool-calling, or dynamic routing that are common in real-world agentic systems. Whether the target back-propagation dynamics scale gracefully to such structures remains an open empirical question.

The prompt-update mechanism itself also leaves room for improvement. GTBP
iteratively adds claims to the prompt, causing the total prompt length to grow
over training, as shown in Figure~\ref{fig:convergence_analysis}(c). In this
work, we did not explicitly optimize for prompt brevity or per-call token cost.
Future work could explore prompt-budgeting schemes, such as merging or pruning
redundant claims, jointly optimizing task loss and prompt length, or introducing
memory-retrieval mechanisms that selectively retrieve relevant claims at
inference time, thereby reducing token cost without sacrificing accuracy.

\section*{Ethical Considerations}
This work does not raise significant ethical concerns. All datasets used in our experiments, HotpotQA, LiveBench-Math, and SubPOP, are publicly available and contain no personally identifiable information. The SubPOP dataset consists of anonymized survey responses aggregated at the demographic subgroup level. No new data collection or human annotation was conducted as part of this work.

\section*{Acknowledgment}
We acknowledge the use of AI writing assistants to enhance writing clarity and AI coding assistants to support the implementation of experiments.

\bibliography{acl_latex}

\appendix

\section{Appendix of Theoretical Analysis}
\label{sec:appendix}
\subsection{Full version of Lemma \ref{lem:implicit_patch}}
\label{full_lemma1}

\begin{lemmafull}
For a transformer block with frozen parameters, let $t$ denote the update
iteration of GTBP. Changing the prompt from $\pi_v^{(t-1)}$ to $\pi_v^{(t)}$
can be interpreted as inducing an effective inference-time weight update
$\Delta W_v^{(t)}$, i.e., 
\begin{equation}
\label{eq:implicit_weight_equivalence}
T_{W_v}\!\left(\pi_v^{(t)}, x_v\right)
=
T_{W_v+\Delta W_v^{(t)}}\!\left(\pi_v^{(t-1)}, x_v\right),
\end{equation}
where 
\begin{equation}
\label{delta_def}
\Delta W_v^{(t)}=\frac{
\left(
W_v\,\delta A_v\!\left(\pi_v^{(t)}, \pi_v^{(t-1)}\right)
\right)
A_v(\pi_v^{(t-1)})^\top
}{
\|A_v(\pi_v^{(t-1)})\|_2^2
}
\end{equation}
with 
\begin{equation*}
\delta A_v\!\left(\pi_v^{(t)}, \pi_v^{(t-1)}\right)
:=
A_v\!\left(\pi_v^{(t)}\right) - A_v\!\left(\pi_v^{(t-1)}\right).
\end{equation*}
\end{lemmafull}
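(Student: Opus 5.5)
We prove the full version of Lemma~\ref{lem:implicit_patch} by a direct rank-one construction, following the implicit weight argument of \citet{dherin2025learning}. We work in the single-block setting of Section~\ref{sec:simplified_theory}. Write $a := A_v(\pi_v^{(t-1)}, x_v)$ and $a' := A_v(\pi_v^{(t)}, x_v)$, with $\delta A_v = a' - a$. The block output is then
\[
T_{W}(\pi, x_v) = H_{W}(A_v(\pi, x_v)) = W\,A_v(\pi, x_v),
\]
possibly followed by a fixed frozen map (residual, normalization, or MLP). That map is the same on both sides of Eq.~\ref{eq:implicit_weight_equivalence}, so it suffices to show that the linear stage agrees: $(W_v + \Delta W_v^{(t)})\,a = W_v\,a'$. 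The key structural fact is that $A_v$ depends only on the prompt and input and not on $W_v$. Replacing $W_v$ by $W_v + \Delta W$ therefore leaves the contextual vector $a$ unchanged.

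The main step is a one-line verification. Substituting Eq.~\ref{delta_def} gives
\[
\Delta W_v^{(t)}\,a = \frac{(W_v\,\delta A_v)\,a^\top a}{\|a\|_2^2} = W_v\,\delta A_v,
\]
because $a^\top a = \|a\|_2^2$. It follows that
\[
(W_v + \Delta W_v^{(t)})\,a = W_v a + W_v(a' - a) = W_v a',
\]
which is exactly $T_{W_v}(\pi_v^{(t)}, x_v)$. The update is well defined whenever $a \neq 0$, and Assumption~\ref{ass:nondegeneracy} guarantees this through $\|a\|_2 \ge \gamma_v > 0$. We add a remark that $\Delta W_v^{(t)}$ depends on $x_v$ through $a$ and $\delta A_v$. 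This is why the main-text statement writes the update as $\Delta W_{v,I_v}^{(t)}$.

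The main obstacle is justifying that only the linear map $H_{W_v}$ is patched, while the attention-based $A_v$ stays fixed, given that the prompt tokens themselves enter $A_v$. With multiple tokens, one must also fix a query position, for example the last token. In the single-token, concatenation-only simplified setting this reduces to the computation above, and we state the lemma at that level of generality.
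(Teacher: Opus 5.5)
Your argument is correct and is exactly the rank-one check $\Delta W_v^{(t)}a = W_v\,\delta A_v$, hence $(W_v+\Delta W_v^{(t)})a = W_v a'$, that underlies the implicit-weight theorem of \citet{dherin2025learning}; the paper invokes that theorem for this lemma without giving a separate proof, and your note that the formula needs $a\neq 0$ (supplied by Assumption~\ref{ass:nondegeneracy}) makes explicit a hypothesis the statement leaves implicit. One small correction: drop ``residual'' from your list of frozen maps applied after the linear stage, since a skip connection that re-adds $A_v$ bypasses the patched weight and would contribute $a'$ on one side but $a$ on the other (this is harmless here, because the paper's $H_{W_v}$ is purely linear).
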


\subsection{Proof of Theorem \ref{lem:edit_budget_bound}}
\label{app:edit_budget_bound}
\begin{proof}
Taking the Frobenius norm on both sides of Eq.~\ref{delta_def} and using
\(\|uv^\top\|_F=\|u\|_2\|v\|_2\), we obtain
\begin{align*}
    &\|\Delta W_v^{(t)}\|_F\\
    &=
    \frac{
    \left\|W_v\bigl(
    A_v(\pi_v^{(t)},x_v)
    -
    A_v(\pi_v^{(t-1)},x_v)
    \bigr)\right\|_2
    }
    {
    \|A_v(\pi_v^{(t-1)},x_v)\|_2
    }\\
    &\le
    \frac{
    \|W_v\|_2
    \|A_v(\pi_v^{(t)},x_v)-A_v(\pi_v^{(t-1)},x_v)\|_2
    }{
    \|A_v(\pi_v^{(t-1)},x_v)\|_2
    }.
\end{align*}
By Assumption~\ref{ass:length_normalized_context},
\[
\|A_v(\pi_v^{(t)},x_v)-A_v(\pi_v^{(t-1)},x_v)\|_2
=
\mathcal{O}(t^{-1}).
\]
By Assumption~\ref{ass:nondegeneracy},
\[
\|A_v(\pi_v^{(t-1)},x_v)\|_2 \ge \gamma_v.
\]
Therefore,
\[
\|\Delta W_v^{(t)}\|_F
\le
\frac{\|W_v\|_2}{\gamma_v} \mathcal{O}(t^{-1}).
\]
Since \(W_v\) is frozen, \(\|W_v\|_2\) is a constant. Hence,
\[
\|\Delta W_v^{(t)}\|_F
=
\mathcal{O}(t^{-1}).
\]
\end{proof}
\subsection{Proof of Theorem~\ref{thm:vanishing_loss_difference}}
\label{vanishing_loss_appendix}
\begin{proof}
By Lemma~\ref{lem:implicit_patch}, the prompt update from
\(\pi_v^{(t-1)}\) to \(\pi_v^{(t)}\) can be represented as an
input-dependent effective weight update \(\Delta W_{v,I_v}^{(t)}\). Thus,
for the local target loss, we can write
\begin{multline*}
    \ell\!\left(
    M_v(I_v;\pi_v^{(t)}), \hat O_v
    \right) \\
    =
    \ell\!\left(
    M_{v,W_v+\Delta W_{v,I_v}^{(t)}}(I_v;\pi_v^{(t-1)}),
    \hat O_v
    \right).
\end{multline*}
Define the per-sample local target loss as
\[
f_{v,t-1}(W_v)
:=
\ell\!\left(
M_{v}(I_v;\pi_v^{(t-1)}), \hat O_v
\right).
\]
Then, using $\mathbb{E}$ as shorthand for $\mathbb{E}_{(x,y^*)\sim\mathcal{T}}$, we have  
\begin{multline}
    \label{eq:diff_Fv}
    F_v(\pi_v^{(t)})-F_v(\pi_v^{(t-1)}) \\
    =
    \mathbb{E}
    \left[
    f_{v,t-1}(W_v+\Delta W_{v,I_v}^{(t)})
    -
    f_{v,t-1}(W_v)
    \right].
\end{multline}

By Assumption~\ref{ass:per_sample_regularity}, \(f_{v,t-1}\) is second-order
smooth along the segment from \(W_v\) to
\(W_v+\Delta W_{v,I_v}^{(t)}\). Therefore, 
\begin{multline}
    \label{eq:apply_smoothness}
    \left|
    f_{v,t-1}(W_v+\Delta W_{v,I_v}^{(t)})
    -
    f_{v,t-1}(W_v)
    \right| \\
    \le
    \left|
    \left\langle
    \nabla f_{v,t-1}(W_v),
    \Delta W_{v,I_v}^{(t)}
    \right\rangle
    \right|
    +
    \frac{L_v}{2}
    \left\|\Delta W_{v,I_v}^{(t)}\right\|_F^2 .
\end{multline}
By the bounded-gradient condition in
Assumption~\ref{ass:per_sample_regularity}, we assume that there exists
\(G_v>0\) such that
\[
\left\|
\nabla f_{v,t-1}(W_v)
\right\|_F
\le
G_v .
\]
Then, by Cauchy--Schwarz, the first term on the right-hand side of
Eq.~\ref{eq:apply_smoothness} satisfies
\begin{align*}
    &\left|
    \left\langle
    \nabla f_{v,t-1}(W_v),
    \Delta W_{v,I_v}^{(t)}
    \right\rangle
    \right|\\
    &\le
    \|\nabla f_{v,t-1}(W_v)\|_F
    \left\|\Delta W_{v,I_v}^{(t)}\right\|_F \\
    &\le
    G_v
    \left\|\Delta W_{v,I_v}^{(t)}\right\|_F .
\end{align*}
Hence, we can rewrite Eq.~\ref{eq:apply_smoothness} as
\begin{multline}
    \label{eq:apply_smoothness_2}
    \left|
    f_{v,t-1}(W_v+\Delta W_{v,I_v}^{(t)})
    -
    f_{v,t-1}(W_v)
    \right| \\
    \le
    G_v
    \left\|\Delta W_{v,I_v}^{(t)}\right\|_F
    +
    \frac{L_v}{2}
    \left\|\Delta W_{v,I_v}^{(t)}\right\|_F^2 .
\end{multline}

By Theorem~\ref{lem:edit_budget_bound}, we have that
\[
\left\|\Delta W_{v,I_v}^{(t)}\right\|_F
=
\mathcal{O}(t^{-1}),
\]
which leads to 
\begin{equation}
    \label{eq:first_moment_diffW}
    \mathbb{E}
    \left[
    \left\|\Delta W_{v,I_v}^{(t)}\right\|_F
    \right]
    =
    \mathcal{O}(t^{-1}),
\end{equation}
and
\begin{equation}
    \label{eq:second_moment_diffW}
    \mathbb{E}
    \left[
    \left\|\Delta W_{v,I_v}^{(t)}\right\|_F^2
    \right]
    =
    \mathcal{O}(t^{-2}).
\end{equation}
Therefore, by combining Eqs.~\ref{eq:diff_Fv}, \ref{eq:apply_smoothness_2}, \ref{eq:first_moment_diffW}, and \ref{eq:second_moment_diffW}, we obtain
\begin{align*}
    &\left|
    F_v(\pi_v^{(t)})-F_v(\pi_v^{(t-1)})
    \right| \\
    &\le
    \mathbb{E}
    \left[
    \left|
    f_{v,t-1}(W_v+\Delta W_{v,I_v}^{(t)})
    -
    f_{v,t-1}(W_v)
    \right|
    \right] \\
    &=
    \mathcal{O}(t^{-1})+\mathcal{O}(t^{-2})\\
    &=
    \mathcal{O}(t^{-1}).
\end{align*}

\end{proof}
\subsection{Proof of Theorem~\ref{thm:negative_inner_product}}
\label{app:negative_inner_product}

\begin{proof}
Let
\[
F(\Pi_\Phi)
=
\mathbb{E}_{(x,y^*)\sim\mathcal{T}}
\left[
\ell\!\left(\Phi(x;\Pi_\Phi),y^*\right)
\right]
\]
denote the overall objective in Eq.~\ref{eq:prompt_optimization}. For the
one-hidden-layer workflow, the intermediate responses at iteration \(t\) are
\[
O_k^{(t)} = M_k(x;\pi_k^{(t)}), \qquad k=1,\dots,K,
\]
and the final output is
\[
y^{(t)}
=
M_{\mathrm{out}}
\left(
x,O_1^{(t)},\dots,O_K^{(t)};\pi_{\mathrm{out}}^{(t)}
\right).
\]
Consider the prompt-update step for the first-layer modules. Sufficiently small
\(\epsilon_{\mathrm{opt},k}\) means that the LLM optimizer is powerful enough
to generate prompts whose outputs are close to the inferred local targets. Thus, for each first-layer node \(k\), the updated response becomes
sufficiently close to its inferred local target \(\hat O_k\).
Since \(K\) is finite, the aggregated input received by the output node 
\[
I_{\mathrm{out}}^{(t)}
=
(x,O_1^{(t)},\dots,O_K^{(t)})
\]
also
close to the target-consistent input 
\[
\hat I_{\mathrm{out}}
=
(x,\hat O_1,\dots,\hat O_K).
\]
By Assumption~\ref{ass:length_normalized_context} applied to the output node,
the contextual representation change induced by the updated output-node input
and prompt is \(\mathcal{O}(t^{-1})\). Therefore, by
Theorem~\ref{thm:vanishing_loss_difference}, the corresponding change in the
output-node local target objective is also \(\mathcal{O}(t^{-1})\):
\begin{equation}
    \label{eq:diff_Fout_bound}
    \left|
    F_{\mathrm{out}}(I_{\mathrm{out}}^{(t)},\pi_{\mathrm{out}}^{(t)})
    -
    F_{\mathrm{out}}(\hat I_{\mathrm{out}},\pi_{\mathrm{out}}^{(t-1)})
    \right|
    =
    \mathcal{O}(t^{-1}).
\end{equation}
With Assumption \ref{ass:backward_optimization_capability}, with $\hat{I}_{\mathrm{out}}$ we have 
\begin{equation*}
    y^{*}=M_{\mathrm{out}}\left(x,\hat{O}_1,\dots,\hat{O}_K;\pi_{\mathrm{out}}^{(t-1)}\right),
\end{equation*}
which results in 
\begin{equation}
    \label{eq:Fout_t-1}
    F_{\mathrm{out}}(\hat I_{\mathrm{out}},\pi_{\mathrm{out}}^{(t-1)})=F(\Pi_\Phi^*).
\end{equation}
Since 
\begin{equation*}
    F_{\mathrm{out}}( I_{\mathrm{out}}^{(t)},\pi_{\mathrm{out}}^{(t)})=F(\Pi_\Phi^{(t)}),
\end{equation*}
applying Eq.~\ref{eq:Fout_t-1}, Eq.~\ref{eq:diff_Fout_bound} becomes
\begin{equation*}
    |F(\Pi_\Phi^{(t)})-F(\Pi_\Phi^*)|=\mathcal{O}(t^{-1}).
\end{equation*}
\end{proof}
\section{Appendix of Experiment}
\label{app:subpop_results}
This section expands on the methodology and results introduced in Section~\ref{sec:experiments}, providing group-level breakdowns across demographic categories for the SubPOP dataset~\cite{subpop-suh-etal-2025-language}. 

Table~\ref{tab:demographic_results} compares our proposed algorithm against the self-reported performance metrics in~\cite{subpop-suh-etal-2025-language} and several baseline approaches.

\noindent\textbf{Few-shot prompting}: We consider a conditioning prompt that includes group information along with the group's response distribution over training questions, followed by the test question as the instruction prompt. In our experiments, we use 5 examples as the few-shot baseline.

\noindent\textbf{Fine-tuned model}: We report benchmark metrics from \cite{subpop-suh-etal-2025-language}, where the authors fine-tuned several LLMs of varying parameter counts using the training data.

\noindent\textbf{GEPA}: We leverage GEPA~\cite{agrawal2025gepa} as a prompt training optimizer. 
Example prompts trained using GEPA is provided in Appendix.

\noindent\textbf{GTBP}: Example prompts trained using GTBP are provided in Appendix.  

Our proposed approach (GTBP) achieves a lower average Wasserstein Distance (WD) than GEPA across demographic groups, demonstrating consistent improvements in bias mitigation without task-specific fine-tuning. Furthermore, when compared against fine-tuned models of similar parameter counts, specifically Llama-2-13B and Mistral-7B-v0.1, our approach still attains competitive or superior performance, highlighting its effectiveness as a model weight training alternative.

However, our method does not outperform fine-tuned Llama-3-70B on average WD. We attribute this gap to two complementary factors: (1) the inherent representational capacity of larger models, which may enable more nuanced demographic inference by default, and (2) the benefit of supervised fine-tuning, which directly optimizes the model toward the target distribution. These results suggest that scaling model size or incorporating fine-tuning remain important considerations for future work.

\begin{table*}[t]
\centering
\small
\setlength{\tabcolsep}{4pt}
\begin{adjustbox}{width=\textwidth}
\begin{tabular}{cccccccccccccc}
\multirow{2}{*}{Attribute}          & \multirow{2}{*}{Group} & \multirow{2}{*}{Human Baseline} & \multicolumn{2}{c}{Llama-2-7B} & \multicolumn{2}{c}{Llama-2-13B} & \multicolumn{2}{c}{Mistral-7B-v0.1} & \multicolumn{3}{c}{GPT-4.1-mini}           & \multicolumn{2}{c}{Llama-3-70B} \\
                                    &                        &                                 & Base        & Fine-tuned       & Base         & Fine-tuned       & Base         & Fine-tuned           & Base   & GEPA            & \textbf{GTBP}   & Base           & Fine-tuned     \\ \hline
\multirow{2}{*}{Region}             & Northeast              & 0.027                           & 0.196       & 0.113            & 0.193        & 0.103            & 0.185        & 0.108                & 0.0991 & 0.0982          & \textbf{0.0952} & 0.156          & 0.078          \\
                                    & South                  & 0.018                           & 0.183       & 0.108            & 0.185        & 0.103            & 0.176        & 0.103                & 0.1168 & \textbf{0.0994} & 0.101           & 0.138          & 0.08           \\
\multirow{2}{*}{Education}          & College grad*          & 0.019                           & 0.206       & 0.105            & 0.175        & 0.101            & 0.167        & 0.099                & 0.0954 & 0.0864          & \textbf{0.084}  & 0.137          & 0.077          \\
                                    & Less than high school  & 0.036                           & 0.191       & 0.129            & 0.182        & 0.117            & 0.172        & 0.121                & 0.1326 & 0.119           & \textbf{0.112}  & 0.18           & 0.108          \\
\multirow{2}{*}{Gender}             & Male                   & 0.017                           & 0.186       & 0.102            & 0.176        & 0.101            & 0.17         & 0.099                & 0.1063 & \textbf{0.0904} & 0.092           & 0.15           & 0.079          \\
                                    & Female                 & 0.016                           & 0.184       & 0.108            & 0.198        & 0.105            & 0.176        & 0.1                  & 0.105  & \textbf{0.084}  & 0.094           & 0.151          & 0.08           \\
\multirow{4}{*}{Race / ethnicity}   & Black                  & 0.029                           & 0.2         & 0.114            & 0.179        & 0.102            & 0.17         & 0.107                & 0.1099 & 0.124           & \textbf{0.089}  & 0.139 &  0.094 \\
                                    & White                  & 0.014                           & 0.19        & 0.105            & 0.187        & 0.103            & 0.181        & 0.102                & 0.1    & \textbf{0.0905} & 0.0922          & 0.153          & 0.083          \\
                                    & Asian                  & 0.049                           & 0.201       & 0.119            & 0.19         & 0.107            & 0.184        & 0.114                & 0.1158 & 0.113           & \textbf{0.107}  & 0.158          & 0.096          \\
                                    & Hispanic               & 0.05                            & 0.204       & 0.133            & 0.199        & 0.122            & 0.182        & 0.134                & 0.1377 & 0.13            & \textbf{0.1198} & 0.172          & 0.115          \\
\multirow{2}{*}{Income}             & \$100,000 or more      & 0.021                           & 0.21        & 0.111            & 0.184        & 0.106            & 0.176        & 0.102                & 0.1145 & \textbf{0.0937} & 0.0985          & 0.179          & 0.082          \\
                                    & Less than \$30,000     & 0.026                           & 0.179       & 0.115            & 0.172        & 0.103            & 0.165        & 0.105                & 0.1083 & 0.102           & \textbf{0.099}  & 0.171          & 0.086          \\
\multirow{2}{*}{Political Party}    & Democrat               & 0.02                            & 0.219       & 0.103            & 0.197        & 0.092            & 0.199        & 0.091                & 0.1011 & 0.105           & \textbf{0.0816} & 0.128          & 0.076          \\
                                    & Republican             & 0.023                           & 0.205       & 0.123            & 0.234        & 0.117            & 0.206        & 0.115                & 0.1321 & 0.144           & \textbf{0.109}  & 0.187          & 0.093          \\
\multirow{3}{*}{Political Ideology} & Liberal                & 0.019                           & 0.224       & 0.102            & 0.191        & 0.09             & 0.188        & 0.096                & 0.1034 & 0.109           & \textbf{0.09}   & 0.134          & 0.076          \\
                                    & Conservative           & 0.022                           & 0.184       & 0.12             & 0.178        & 0.112            & 0.172        & \textbf{0.113}       & 0.1425 & 0.142           & 0.136           & 0.16           & 0.092          \\
                                    & Moderate               & 0.018                           & 0.191       & 0.11             & 0.183        & 0.103            & 0.17         & 0.103                & 0.0989 & \textbf{0.092}  & 0.095           & 0.141          & 0.082          \\
\multirow{5}{*}{Religion}           & Protestant             & 0.019                           & 0.187       & 0.11             & 0.179        & 0.107            & 0.172        & \textbf{0.105}       & 0.1318 & 0.112           & 0.1111          & 0.164          & 0.082          \\
                                    & Jewish                 & 0.066                           & 0.245       & 0.149            & 0.226        & 0.144            & 0.218        & 0.129                & 0.1157 & 0.149           & \textbf{0.113}  & 0.164 & 0.119 \\
                                    & Hindu                  & 0.095                           & 0.264       & 0.18             & 0.253        & 0.169            & 0.252        & 0.186                & 0.2107 & \textbf{0.177}  & 0.18            & 0.223          & 0.166          \\
                                    & Atheist                & 0.021                           & 0.222       & 0.126            & 0.207        & 0.103            & 0.199        & 0.116                & 0.1135 & 0.113           & \textbf{0.095}  & 0.132          & 0.106          \\
                                    & Muslim                 & 0.09                            & 0.253       & 0.175            & 0.24         & 0.181            & 0.238        & 0.173                & 0.1829 & 0.179           & \textbf{0.163}  & 0.203          & 0.158          \\ \hline
\multicolumn{2}{c}{\textbf{Average WD}}                      & 0.033                           & 0.206       & 0.121            & 0.196        & 0.113            & 0.187        & 0.115                & 0.122  & 0.116           & \textbf{0.107}  & 0.160          & 0.096         
\end{tabular}
\end{adjustbox}
\caption{Bias/divergence scores across demographic groups and model variants.}
\label{tab:demographic_results}
\end{table*}

\onecolumn

\begin{lstlisting}[
    caption={Example of GEPA trained prompt for HotpotQA},
    label={lst:gepa hotpot},
    basicstyle=\ttfamily\small,
    breaklines=true,
    frame=single
]
This agent answers multi-hop questions from the HotpotQA benchmark where each question is paired with multiple supporting context paragraphs containing relevant information.

Task Overview:
- The questions focus on two main reasoning patterns:
  1. Bridge Questions: Require connecting intermediate entities or facts across different context pieces to reach the final answer.
  2. Comparison Questions: Require comparing two entities based on explicit, extractable attributes from the context to determine a yes/no answer or select the entity that better meets criteria.

Requirements for Processing:
1. Identify all relevant entities referenced in the question from the context.
2. Extract explicit, directly stated attributes or facts related to these entities that are needed to answer the question.
3. If the question involves bridging information, connect facts across multiple contexts logically and sequentially.
4. For comparison questions, isolate and compare the explicit attributes of the two entities without overgeneralization or inferential leaps.
5. Provide a final answer that is concise and precise, limited to:
   - A short phrase,
   - A specific entity name,
   - Or a direct "yes" or "no" answer depending on the question.

Key Domain-Specific Guidance:
- When identifying shared fields, professions, or attributes between two entities, only use explicitly stated, precise commonalities in the text. Avoid broad, implicit commonalities. For example, prefer "ESA astronaut" rather than the broader "astronauts" if that specificity is provided.
- For questions involving modes or means (e.g., travel methods), respond only with the mode itself, excluding ancillary context.
- When asked about something like starring roles or main works associated with an individual, distinguish starring/main roles from minor appearances or voice acting. Use only explicitly identified starring roles as answers.
- Avoid any explanatory or background details in the final output. The answer must be the succinct final solution to the question without added context.

Answer Formatting:
- Provide the answer in the form requested by the question (name, phrase, yes/no).
- Do not include supporting explanations, justifications, or references.
  
Overall Approach for Multi-hop Reasoning:
- Parse the question to identify entities, relationships, or comparison targets.
- Extract attributes for these entities from the contexts.
- Apply bridge reasoning by linking intermediate information when required.
- For comparisons, explicitly isolate and evaluate attributes based on context-provided facts.
- Produce a clean, precise final answer strictly aligned with the wording and scope of the question.

This precision and structured approach ensure accurate and concise answers on multi-hop and comparison questions that require synthesizing multiple context paragraphs with domain-specific understanding.
\end{lstlisting}

\begin{lstlisting}[
    caption={Example of GTBP trained prompt for HotpotQA - Node $L_0\_N_0$},
    label={lst:gtbp hotpot prompt l0n0},
    basicstyle=\ttfamily\small,
    breaklines=true,
    frame=single
]
- Use the exact adjective form for nationality as expected by the question, such as 'English' instead of 'British' and 'American' instead of 'United States'.  
- Verify the precise chemical compound name when synonyms or colloquial terms are given, ensuring exact identification rather than a related analogue.  
- For questions about multiple species coexisting, prefer broader regions (e.g., 'Africa') over specific countries unless all species are explicitly confirmed there.  
- For blockbuster film questions by year, select a film explicitly listed as a blockbuster in that year and different from the one named in the question.  
- When asked about the meaning or origin of a term, confirm if the expected answer is the literal meaning or the entity itself, and respond accordingly.  
- For geographic questions involving suburbs and cities, provide the official city or local government area name rather than just the city name.  
- Provide full exact dates (day, month, year) when a specific date is requested, not just the year.  
- Focus on the most specific geographic or administrative entity mentioned when multiple levels are referenced.  
- Extract multi-dimensional patterns from question-answer pairs involving named entities, cultural references, and factual data across domains like entertainment, history, geography, and science; explicitly state common genres and clarify genre relationships, especially subgenres.  
- Identify entity types (persons, places, organizations, works of art) and their relationships, including roles (actor, director, author), affiliations (bands, universities, companies), and temporal markers (birthdates, event dates).
\end{lstlisting}

\begin{lstlisting}[
    caption={Example of GTBP trained prompt for HotpotQA - Node $L_0\_N_1$},
    label={lst:gtbp hotpot prompt l0n1},
    basicstyle=\ttfamily\small,
    breaklines=true,
    frame=single
]
- Extract nuanced entity relationships and temporal dynamics from question-answer pairs across entertainment, history, geography, and science.  
- Identify precise entity types-individuals, creative works, organizations, locations-and their interrelations, including roles, affiliations, biographical, or production details; always use full official names exactly as expected, avoiding abbreviations unless explicitly requested.  
- For biographical questions, provide the full, most specific entity name preferred by the target, including middle names, initials, birth and death dates, and explicitly name spouses with full or commonly recognized titles.  
- For creative works, analyze production credits, release years, source materials, and genres to predict concise factual answers naming key contributors or exact titles, including subtitles, punctuation, and release years when relevant.  
- For geographic or institutional queries, match the requested geographic specificity-distinguish continent, country, city, or administrative region-and provide the most precise official entity; use official abbreviations or acronyms only if commonly used and expected; align population or statistics with the question's timeframe.  
- When multiple valid answers exist, prioritize the one explicitly mentioned or most directly supported by context rather than a generic or well-known example.  
- Carefully verify whether the question asks for the meaning, origin, or the entity itself; do not confuse name origins with entity identity or descriptions.  
- Differentiate reasoning patterns: bridge questions link entities across contexts; comparison questions evaluate attributes (age, size, membership) to select the best entity or value, verifying context completeness and clarifying common genres when comparing related but distinct artists; when comparing sizes, explicitly name the larger entity with exact area figures and relative size claims.  
- Provide direct, succinct answers-entity names, short phrases, or specific dates-favoring the most prominent or widely recognized entity; strictly match expected answer formats, including official team or program names, capitalization, spacing, and nationality adjectives in the most specific official form (e.g., "English" not "British"); for joint actions or events, include all relevant individuals' full names.  
- Apply precise temporal reasoning aligned with the question's timeframe and expected answer formats (e.g., "one" instead of numeric approximations) to match historical sequencing; include full temporal details (month, day, year) when context provides them and the question asks for a specific date.
\end{lstlisting}

\begin{lstlisting}[
    caption={Example of GTBP trained prompt for HotpotQA - Node $L_1\_N_0$},
    label={lst:gtbp hotpot prompt l1n0},
    basicstyle=\ttfamily\small,
    breaklines=true,
    frame=single
]
- Clarify temporal boundaries and calculate tenure strictly within the specified time frame, providing exact dates if available and distinguishing total tenure from overlapping periods.  
- Prioritize precise identification of entity types, roles, affiliations, and temporal markers, including the exact adjective form of nationality as expected.  
- Distinguish bridge and comparison reasoning patterns; confirm units and scope of size or attribute measurements; prioritize the most specific and commonly recognized geographic entity and use the most specific, complete names to avoid ambiguity.  
- Provide concise, direct answers matching the question's expected format; trim overly long titles to main titles unless subtitles are explicitly requested.  
- Confirm the mode of travel is explicitly stated and consistent across analyses, including prepositions and phrasing for precision.  
- When multiple candidates exist, verify the exact entity mentioned and consider alternative correct answers; prioritize commonly recognized full names or those matching the question's expected specificity.  
- Integrate cultural and domain-specific knowledge, accepting common variants, abbreviations, and acronyms if widely recognized and relevant, ensuring clarity and completeness.  
- Resolve ambiguity by prioritizing primary contributors or lead figures over peripheral ones, especially in creative or biographical contexts.  
- Clarify entities detected by metabolites by explicitly naming the chemical or enzyme, carefully verifying chemical names to avoid confusion between similar compounds or synonyms.  
- Verify that context fully supports affiliations or attributes assigned, ensuring all relevant entities are present before concluding.
\end{lstlisting}

\begin{lstlisting}[
    caption={Example of GEPA trained prompt for LiveBench-Math},
    label={lst:gepa math},
    basicstyle=\ttfamily\small,
    breaklines=true,
    frame=single
]
You will be given a mathematical problem along with a partially completed solution or a related task that involves filling in missing formulae or expressions, solving integrals, or selecting and formatting final answers. Your task is to:

1. Carefully analyze the problem statement and any partial solutions or instructions provided.

2. When the problem involves matching missing formula tags (e.g., <missing X>) to a given list of LaTeX expressions, proceed as follows:
   - Read the solution step-by-step and understand the mathematical context of each missing tag.
   - Identify the most appropriate formula from the provided list that logically and mathematically fits each missing tag.
   - Provide detailed reasoning for each match, explaining why a particular formula fits a specific missing tag.
   - Present your final answer as a comma-separated list of expression identifiers corresponding exactly to the missing tags in order (e.g., "5, 22, 3, ...").
   - Avoid extraneous commentary or formatting that could interfere with automated parsing.

3. When the problem requires solving an integral or other direct mathematical computation:
   - Provide a clear, step-by-step solution using proper mathematical notation (LaTeX).
   - Use substitution, algebraic manipulation, or known formulas as appropriate.
   - Express the final answer in exact form, enclosed in a \(\boxed{}\) environment.
   - Avoid decimal approximations unless explicitly requested.

4. For probability or combinatorial problems involving multiple-choice answers:
   - Perform a rigorous analysis using generating functions, parity arguments, characteristic functions, or other relevant methods.
   - Identify the closest matching multiple-choice option.
   - Output the answer as the corresponding letter repeated exactly five times (e.g., "CCCCC").
   - Do not include any additional text or explanation in the final answer output.

5. For problems involving sequences, sums, or algebraic expressions:
   - Use known formulas (e.g., sum of integers, sum of squares) and clearly show substitutions.
   - When completing the square or manipulating expressions, ensure clarity and unambiguous formatting with parentheses and fraction bars.
   - Express roots or means in exact radical or fractional exponent form unless decimals are requested.

6. Always double-check that your final answer matches the problem's requested format exactly:
   - For missing formula matching tasks, provide only the comma-separated list of expression numbers.
   - For multiple-choice problems, provide only the repeated letter string.
   - For integrals or explicit expressions, provide only the boxed exact answer.

7. Avoid providing extraneous information, explanations, or formatting that could confuse automated evaluation systems.

8. Remember domain-specific facts observed from prior examples:
   - When matching missing formulae, the identifiers correspond strictly to the order of missing tags.
   - The problem context often involves prime numbers, arithmetic progressions, divisibility properties, or combinatorial structures.
   - Polynomial factorization and roots of unity arguments are common in algebraic problems.
   - Probability problems involving parity often require discrete Fourier or characteristic function methods.
   - Geometric problems involving tangent circles require careful use of distance and radius relations, often leveraging symmetry.

By following these detailed guidelines, your solutions will be mathematically sound, clearly reasoned, and formatted precisely for automated grading and parsing systems.
\end{lstlisting}

\begin{lstlisting}[
    caption={Example of GTBP trained prompt for LiveBench-Math - Node $L_0\_N_0$},
    label={lst:gtbp math prompt l0n0},
    basicstyle=\ttfamily\small,
    breaklines=true,
    frame=single
]
- You are an analytical node specializing in multi-angle reasoning for diverse math problems including algebra, number theory, combinatorics, geometry, and calculus.  
- You focus on identifying problem type (e.g., multiple-choice, AIME-style integer, symbolic, or fill-in-the-blank) from the question text to guide the format of the final answer.  
- For multiple-choice questions, you rigorously verify the truth of each statement independently by constructing explicit counterexamples or proofs rather than relying on intuition or partial reasoning. Systematically check all relevant cases or candidates to confirm correctness and avoid incorrect conclusions. Always perform a comprehensive combinatorial or case analysis, developing systematic methods to track and verify the parity of digit occurrences or other problem-specific criteria across all components to avoid overlooking compensating factors or undercounting/overcounting cases. Prioritize correctness and thorough verification over guesswork or partial reasoning, ensuring the final selected letter corresponds exactly to one of the verified correct choices and output it in the required repeated letter format if specified. Always rigorously verify the correctness of the final answer by exhaustive case enumeration or combinatorial checks rather than relying on partial reasoning or intuition. Always rigorously verify the required output format for the final answer, especially when instructions specify a single letter or a repeated letter string; prioritize correctness of the answer content over format adherence, but ensure the final output matches the specified format exactly. Always test multiple concrete counterexamples to verify the truth of logical statements involving gcd or divisibility, rather than relying on intuition or partial reasoning. When analyzing gcd conditions in equations, carefully consider how prime factors propagate through sums or products, and do not assume straightforward implications without explicit verification. For multiple-choice questions with logical statements, rigorously check each statement independently and avoid conflating implications or equivalences without thorough proof or counterexamples. Always rigorously verify 'if and only if' conditions by testing both directions independently with explicit counterexamples or proofs rather than assuming equivalences from partial implications. Avoid relying on intuition or partial factorization arguments when analyzing gcd or divisibility conditions; instead, systematically check prime factors and their propagation through sums or products. 
When evaluating multiple-choice logical statements, independently verify each statement's truth by constructing concrete examples or counterexamples rather than inferring from related statements. Prioritize thorough and systematic prime factor analysis across all variables and conditions before finalizing conclusions about divisibility or gcd properties. Prioritize correctness by explicitly constructing or identifying counterexamples to disprove statements rather than assuming their truth based on partial factorization arguments. When analyzing multiple-choice statements, carefully check the 'if and only if' conditions by testing both directions independently and avoid premature conclusions about equivalences without thorough proof. Prioritize systematic prime factor analysis and explicit verification of gcd conditions across all variables before finalizing conclusions about divisibility properties. Ensure that final answers for multiple-choice questions reflect the logically consistent set of true statements based on thorough gcd and divisibility reasoning rather than guesswork or partial checks. When no option matches the logically consistent set of true statements, clearly state the reasoning and select the best guess based on thorough analysis rather than partial or inconsistent conclusions. When dealing with multiple-choice questions, rigorously confirm the correctness of the final answer by thorough algebraic or combinatorial verification rather than guesswork or intuition, ensuring the final output matches the specified repeated letter format exactly.  
- Always perform a thorough and systematic enumeration of all valid cases when counting or verifying digit occurrences or divisibility conditions, explicitly verifying each candidate case to avoid missing any valid solutions or undercounting results. Avoid relying on intuition or partial checks when verifying parity or frequency conditions; instead, explicitly verify each candidate case to ensure correctness. For multiple-choice questions requiring exact counts, double-check the final tally by cross-verifying with alternative methods or re-enumeration to prevent off-by-one or miscount errors. Prioritize completeness and accuracy in combinatorial or case analysis over speed or guesswork to ensure the final answer matches the problem's requirements exactly.  
- Always systematically translate problem constraints into algebraic equations and verify all derived relationships thoroughly before concluding the count or solution, ensuring no valid cases are missed or miscounted.  
- Always verify the minimum or maximum values in factorization problems by systematically testing all constraints and prime exponent assignments rather than relying on initial plausible assignments, to avoid incorrect conclusions about gcd or lcm values.  
- When solving problems involving gcd and lcm with multiple variables, systematically analyze prime exponents across all variables and constraints, ensuring the minimum exponent is correctly identified for the gcd without premature assumptions or guesses. Ensure consistency across all given conditions before finalizing the answer, avoiding premature assumptions or oversights in factorization.  
- For AIME-style integer answers, you carefully compute or deduce the exact integer solution and output it as a zero-padded three-digit number.  
- For symbolic or numeric problems, you derive or simplify to a closed-form LaTeX expression, ensuring no extraneous prose is included in the output. Always verify the correctness of algebraic expansions or computational outputs, especially for characteristic polynomials, by carefully checking signs and coefficients to avoid significant errors in magnitude or sign. Use systematic methods like cofactor expansion or row reduction and cross-verify results with multiple approaches to prevent arithmetic errors or incorrect simplifications. Maintain exact fractional or integer representations rather than approximations to preserve accuracy and correctness. Avoid premature simplifications or assumptions about polynomial forms; instead, verify correctness by multiple approaches such as cofactor expansion and direct substitution.  
- When factoring polynomials, carefully verify the factorization by expanding the factors to ensure they match the original polynomial exactly, including correct signs and any leading coefficients or negative factors.  
- Always explicitly verify the factorization and matching of polynomial terms in the numerator with the derivative components when performing integration by recognition, to ensure correctness of the integral expression. Confirm the exact constant factors in denominators or coefficients by differentiating the proposed antiderivative and matching it precisely to the original integrand, avoiding assumptions, approximations, or extra scaling factors after integration.  
- Always verify the constant coefficients in integral results by differentiating the proposed antiderivative to ensure exact matching of the original integrand, avoiding premature assumptions or approximations of factors. Carefully apply the chain rule and substitution factors when integrating composite functions to maintain accuracy in the final antiderivative expression, ensuring all inner derivatives and constant multipliers are correctly accounted for without sign errors or omissions.  
- Always verify the final form of integral expressions carefully, ensuring that denominators, constants, and terms reflect the correct integral result without unnecessary additions or arbitrary constants unless explicitly requested. When computing indefinite integrals, always provide the simplest canonical antiderivative form consistent with the problem's instructions-explicitly include the constant of integration when appropriate, and avoid ambiguous or overly general statements about the integral's value. Always simplify and present the integral of zero as zero itself, since the integral of zero is a constant function equal to zero, not just any constant. Always verify the constant factors and the argument inside integral results by differentiating the proposed antiderivative to ensure exact matching of the original integrand, avoiding premature conclusions based on partial simplifications or guesses. When performing substitution in integrals, carefully track all constant multipliers and factors introduced by the substitution to maintain accuracy in the final expression. When expressing integrals of trigonometric functions, always verify the integral by considering trigonometric identities or product-to-sum formulas to match the expected answer forms, rather than relying solely on direct substitution methods. When expressing integrals involving trigonometric functions, consider using trigonometric identities or product-to-sum formulas to represent the integral in the simplest canonical form consistent with the problem instructions.  
- Always maintain exact fractional or symbolic representations throughout all intermediate and final steps rather than introducing approximate decimal multipliers or simplified integers, to preserve precision and correctness. Express final answers in the simplest exact fractional form without converting to decimals or approximations and present the final result in the requested LaTeX boxed format with exact fractions.  
- When completing the square, always include the constant term outside the squared binomial to ensure the expression matches the original quadratic exactly and fully represents the completed square form; maintain careful arithmetic and algebraic manipulation to avoid errors in this constant term.  
- Always simplify radical expressions fully and factor out integer coefficients to present the exact answer in its simplest form without overly complex nested radicals.  
- When expressing geometric means or roots involving negative numbers, always explicitly represent roots of negative numbers inside the radical (e.g., \(\sqrt[n]{-1}\)) rather than placing the negative sign outside, to correctly handle principal roots in complex analysis contexts. Maintain exact fractional exponents and factor out integer powers as constants outside the root to present the simplest exact form of the answer without premature combination of negative factors. Express fractional exponents in simplest radical form and verify the final expression matches the required output format exactly, including the sign and placement of radicals.  
- When differentiating expressions, always carefully verify the application of differentiation rules, especially the product and chain rules, to ensure all factors and constants are correctly included in the final derivative expression. Maintain explicit factors and constants throughout the differentiation process rather than simplifying prematurely or losing critical multiplicative terms. Verify the final form of derivatives by factoring out common terms and combining fractions completely to match the expected simplified format. Explicitly factor constants or logarithmic terms as required to align with the expected final expression format, fully handle base conversions, apply logarithmic identities correctly, and fully simplify all logarithmic expressions. Always apply the full chain rule carefully when differentiating composite functions, ensuring all inner derivatives are correctly accounted for without missing factors or coefficients. Verify each differentiation step by explicitly writing out intermediate derivatives before combining them to catch subtle errors in factors or terms. When simplifying expressions after differentiation, double-check that all terms are correctly combined and no factors are mistakenly halved or doubled. Maintain exact symbolic forms throughout differentiation to avoid introducing approximation errors or incorrect simplifications that affect the final answer.  
- You pay close attention to problem-specific details such as factorization, polynomial roots, matrix characteristic polynomials, divisibility conditions, geometric configurations, and probability computations to inform your analysis. Always verify the characteristic polynomial by explicitly computing the determinant of \(\lambda I - A\) using systematic methods like cofactor expansion or row reduction, double-checking signs and coefficients to avoid significant errors in magnitude or sign. Use multiple approaches to verify the characteristic polynomial and avoid premature simplifications or assumptions about the polynomial form; maintain the exact expanded form with correct fractions to ensure accuracy. Avoid relying on intuition or partial factorization arguments when analyzing characteristic polynomials; instead, systematically check all terms and coefficients for correctness. Always carefully verify the sign conventions and arithmetic operations when expanding determinants or characteristic polynomials to avoid sign errors in the final expression. Double-check each minor and cofactor calculation systematically to ensure correctness of coefficients and terms.  
- Always perform determinant calculations using systematic methods like cofactor expansion or row reduction, and verify results with multiple approaches to avoid arithmetic errors or incorrect simplifications. Present the final determinant in the exact simplified fractional or integer form as requested, avoiding approximations, incorrect zero conclusions, or incorrect integer simplifications.  
- Carefully analyze the logical flow and dependencies between expressions to assign formulae in the correct order, reducing guesswork and ensuring global consistency in the solution narrative. Prioritize assigning formulae to all missing tags comprehensively rather than only the initial ones. Prioritize logical and contextual consistency of each formula assignment rather than relying on partial pattern matches or assumptions to maintain the integrity of the solution narrative. Double-check the order of expressions when rewriting sums or complex expressions to maintain correctness and clarity, reducing guesswork and ensuring a coherent final answer. Avoid relying solely on intuition or partial pattern matches; instead, systematically verify each formula's relevance and correctness in the context of the solution steps.  
- When integrating trigonometric functions, consider using trigonometric identities such as product-to-sum formulas to simplify the integrand before integration, rather than relying solely on direct substitution methods.  
- When solving geometry problems involving volumes and surface areas, use established formulas like the Cayley-Menger determinant and Heron's formula carefully and verify all intermediate steps to ensure exact, consistent, and simplified results, avoiding lengthy coordinate computations prone to arithmetic errors. Always verify complex determinant calculations and volume formulas using exact algebraic methods rather than approximations or assumptions to avoid significant errors in magnitude or sign. Always verify the correctness of geometric formulas and trigonometric identities by cross-checking with multiple approaches or known results to avoid subtle errors in final numeric answers. When expressing final answers involving sums or combinations of parameters, ensure the arithmetic and interpretation of the answer format are correct and consistent with problem instructions.  
- Always carefully verify the formula for sample variance, ensuring the correct divisor (n-1) for sample variance and the correct computation of squared deviations from the mean before finalizing the answer. Carefully verify arithmetic operations in summations and divisions when computing statistical measures like variance to avoid propagation of errors. Maintain exact fractional or symbolic representations throughout calculations rather than converting to decimals prematurely to preserve precision and correctness. Verify the divisor used in variance calculations (n-1 for sample variance) and ensure squared deviations are computed accurately before finalizing the answer. Always double-check arithmetic operations in statistical calculations, especially when computing sums and divisions, to avoid propagation of errors. Use exact fractional forms throughout calculations to maintain precision and avoid incorrect decimal or integer approximations.  
- When expressing roots with fractional exponents, always separate integer powers outside the root and represent negative factors explicitly inside the root (e.g., \(\sqrt[n]{-1}\)) to maintain exact canonical form and clarity in the final answer. Maintain exact fractional exponents and avoid prematurely combining negative factors into a single negative multiplier outside the root to ensure correctness in complex root expressions. Rationalize denominators and simplify radicals fully to present answers in the simplest exact form, including all integer coefficients and prime bases raised to fractional powers reflecting the full prime factorization and exact root extraction.  
- Ensure the correct divisor is used for sample variance calculations (n-1) and double-check squared deviation computations before finalizing the answer.  
- Always verify the exact coefficients and sign patterns in polynomial expansions or trigonometric identities by referencing known formulas or derivations rather than relying on heuristic or partial reasoning.  
- For problems requiring exact symbolic answers, directly compute and present the final simplified expression in the requested format rather than referencing unrelated or incomplete intermediate steps; ensure correct handling of negative factors and fractional exponents in roots, and avoid outputting unrelated formula indices or placeholders.  
- Always provide the simplest canonical form of an indefinite integral, explicitly including the constant of integration when appropriate, and avoid ambiguous or overly general statements about the integral's value.  
- When computing statistical measures like sample variance, carefully verify each arithmetic step, especially squared deviations and summations to avoid propagation of errors; double-check the divisor used (n-1 for sample variance) and ensure all intermediate calculations are exact fractions or symbolic expressions.  
- Always perform determinant calculations using systematic methods like cofactor expansion or row reduction, and verify results with multiple approaches to avoid arithmetic errors or incorrect simplifications. Double-check sign conventions and arithmetic operations carefully when expanding determinants to avoid significant errors in magnitude or sign. Maintain exact fractional or integer representations throughout calculations rather than approximations to preserve accuracy. Verify intermediate steps thoroughly before finalizing the determinant value to prevent large discrepancies from expected results.  
- Always double-check arithmetic operations in polynomial expansions or determinant calculations, especially when dealing with fractions, to avoid minor but impactful errors in coefficients or constants. Use multiple verification methods, such as cofactor expansion and direct substitution, to confirm the correctness of characteristic polynomials or similar algebraic expressions.  
- Always carefully apply the chain rule and substitution factors when integrating composite functions to avoid sign errors in the final antiderivative expression.  
- Always perform a thorough and systematic enumeration of all valid cases when counting or verifying digit occurrences or parity conditions, explicitly verifying each candidate case to avoid missing valid solutions or undercounting results.
- Always verify the characteristic polynomial by explicitly computing the determinant of \(\lambda I - A\) using systematic methods like cofactor expansion or row reduction, and cross-check results with multiple approaches to avoid arithmetic errors or incorrect simplifications. Carefully double-check signs and coefficients in polynomial expansions to prevent significant errors in magnitude or sign, especially for constant terms. Maintain exact fractional or integer representations rather than approximations to preserve accuracy and correctness in algebraic computations. Avoid premature simplifications or assumptions about polynomial forms; verify correctness by multiple approaches such as cofactor expansion and direct substitution.
- When matching formulae to missing tags in a solution, always assign formulae to all missing tags comprehensively rather than only the initial ones to ensure completeness and correctness. Prioritize logical and contextual consistency of each formula assignment rather than relying on partial pattern matches or assumptions to maintain the integrity of the solution narrative. Avoid reusing formulae incorrectly and ensure the sequence of missing tags aligns with the problem's reasoning steps. Double-check the order of expressions when rewriting sums or complex expressions to maintain correctness and clarity, reducing guesswork and ensuring a coherent final answer.
\end{lstlisting}

\begin{lstlisting}[
    caption={Example of GTBP trained prompt for LiveBench-Math - Node $L_0\_N_1$},
    label={lst:gtbp subpop prompt l0n1},
    basicstyle=\ttfamily\small,
    breaklines=true,
    frame=single
]
- You are an analytical node specializing in detailed structural and conceptual reasoning across algebra, number theory, combinatorics, geometry, and functional equations.  
- You focus on identifying the problem format-multiple-choice, AIME-style integer, symbolic expression, or fill-in-the-blank with missing formula tags-to determine the precise answer output style.  
- For multiple-choice problems, ensure the final answer choice is clearly justified by consistent reasoning and cross-checked calculations, avoiding guess-based or heuristic conclusions. Provide a reasoned, justified final answer supported by detailed combinatorial or geometric analysis. Ensure the final answer choice is clearly justified by consistent reasoning and cross-checked calculations, explicitly connecting the final coefficient or value to the known formula and explaining the role of each relevant element. Avoid ambiguous or incomplete divisibility or gcd reasoning. Analyze underlying mathematical principles, constraints, and typical solution heuristics to select the correct letter, then output that letter repeated five times. Provide clear, logically deduced answers that directly relate to the problem's constraints and conditions, avoiding unrelated or guess-based outputs. Strictly adhere to the known and standard formulas for coefficients or values, avoiding the introduction of extraneous multiplicative factors that alter the correct answer. Always verify the final answer choice by cross-checking calculations and reasoning to avoid heuristic or guess-based conclusions, even in complex problems.  
- Always verify the exact counting or enumeration conditions in combinatorial problems, especially when divisibility or modular constraints are involved, to avoid off-by-one errors or miscounts in the final answer.  
- Always perform a comprehensive enumeration of all valid cases in combinatorial counting problems to avoid undercounting solutions due to overlooked configurations or constraints. Verify the parity conditions comprehensively across all digits or elements involved, ensuring the combined counts meet the problem's requirements without prematurely discarding valid cases. Systematically cross-check and reconcile all constraints simultaneously rather than sequentially to ensure no valid solutions are excluded prematurely. When the problem involves digit counts or parity, carefully consider digits that appear zero times as well, since zero counts are even and can affect the overall parity conditions.  
- Verify the consistency of all problem constraints simultaneously rather than sequentially to ensure no valid solutions are excluded prematurely.  
- For AIME-style integer answers, carefully perform exact computations or logical deductions, ensuring the final output is a zero-padded three-digit integer.  
- For symbolic or numeric problems, derive fully exact, fully simplified closed-form LaTeX expressions without extraneous explanation, emphasizing exactness and clarity. Always verify the constant factors and coefficients in integral results by differentiating the proposed antiderivative to confirm exact matches, including all constant factors and denominators, avoiding heuristic or approximate constants or scaling errors. Use exact symbolic manipulation and algebraic expansion to ensure the numerator and denominator terms align perfectly. Present the final answer in the simplest canonical form with explicit constants and the constant of integration.  
- Always include the constant of integration explicitly when providing indefinite integrals to ensure completeness and correctness of the solution.  
- When providing indefinite integrals, present the final answer in the problem's requested format as the simplest canonical antiderivative form plus the constant of integration, boxed and formatted exactly as specified, with no extraneous multiplicative constants or scaling factors. Represent the integral of zero as the simplest canonical constant \(\boxed{0}\) rather than a general constant of integration, to match expected answer formats and avoid ambiguity.  
- Always verify the constant of integration is included and the substitution is properly applied when computing indefinite integrals, ensuring the final answer matches the expected canonical form exactly without extraneous factors or sign errors.  
- When providing indefinite integrals, always consider if the problem expects a specific structural form or identity-based expression rather than just the straightforward antiderivative; adapt the output accordingly to match the problem's requested format or emphasis on structural decomposition, exploring equivalent trigonometric or algebraic identities that reveal deeper structural insights or canonical forms.  
- When differentiating complex functions involving logarithms and trigonometric compositions, always carefully apply the chain rule and verify derivatives of composite functions, especially when radicals or nested functions are involved, ensuring the derivative of the inner function is correctly computed and applied. Always carefully track and include all constant factors and bases when differentiating logarithmic functions, especially when changing between logarithm bases (e.g., natural log vs. log base 10). Combine terms into a single simplified fraction with a common denominator and factor out constants to ensure clarity and correctness in the final expression. Verify each step of differentiation by carefully applying the chain, product, and quotient rules, and simplify intermediate expressions to avoid errors in constants and denominators. When applying the quotient rule, ensure consistent and precise algebraic manipulation to factor and combine terms exactly as required by the problem's expected answer format.  
- When differentiating or integrating trigonometric functions with linear arguments, carefully apply the chain rule and confirm the resulting antiderivative by differentiation to avoid sign or coefficient mistakes.  
- Always use exact symbolic arithmetic and consistent determinant expansion methods when computing characteristic polynomials and determinants to avoid sign, coefficient, and arithmetic errors; systematically verify each minor and cofactor calculation and double-check arithmetic operations when combining like terms to ensure the polynomial or determinant is exact, consistent, and correctly formatted in the final output. Present the characteristic polynomial in a simplified, exact symbolic form with correct fractional coefficients and consistent variable usage. Present the final polynomial or determinant in a clear, properly formatted boxed expression, verifying the correctness of all constants and coefficients before finalizing the answer. Always verify the sign conventions and variable usage in characteristic polynomial computations to ensure consistency with the problem statement and standard definitions, avoiding sign errors or variable mismatches. Use consistent and exact symbolic arithmetic methods rather than heuristic or approximate numerical approaches when dealing with matrix operations to prevent magnitude and sign mistakes in coefficients. Carefully track and include all constant factors and bases when expanding determinants, and double-check arithmetic operations when combining like terms to ensure the polynomial or determinant is exact, consistent, and correctly formatted in the final output. Systematically verify each minor and cofactor calculation and the overall determinant expansion to avoid sign, coefficient, and arithmetic errors.  
- When computing characteristic polynomials, consistently follow the problem's specified form (e.g., \(\det(\lambda I - A)\)) without altering the leading coefficient or sign to maintain format consistency. Carefully track signs and coefficients throughout determinant expansion and minor calculations to avoid sign and arithmetic errors. Use exact symbolic arithmetic and verify each step systematically rather than applying heuristic or approximate numerical methods to ensure the polynomial is exact and correctly formatted. Present the final polynomial exactly as derived, respecting the problem's conventions, rather than normalizing or changing the polynomial's leading coefficient.  
- Always verify the correctness of algebraic manipulations by expanding the final expression to ensure it matches the original polynomial exactly, especially when completing the square or rewriting quadratics in vertex form. Double-check arithmetic operations and the placement of constants outside the squared binomial to avoid errors in the completed square form representation. Always verify the sign conventions and factorization forms when converting between completed square forms and factored expressions to ensure consistency and correctness in the final answer.  
- For fill-in-the-blank problems with masked formulae, systematically verify and reconcile missing formulae step-by-step rather than guessing or skipping tags, ensuring logical consistency and contextual alignment of each formula with the problem statements. Cross-validate each formula assignment with the problem's conditions and dependencies to minimize errors and improve reliability of the final matched list. Avoid partial or heuristic matching of formulae; instead, clearly indicate best guesses and verify the logical flow and dependencies among formulae and solution steps to minimize errors and improve accuracy.  
- You pay special attention to prime factorization patterns, divisibility and gcd/lcm relations, polynomial root structures, matrix characteristic polynomials, and geometric configurations involving congruences, similarity, and circle properties.  
- When dealing with problems involving gcd and lcm of multiple integers, always rigorously verify gcd and lcm conditions across all relevant pairs and combinations to avoid incorrect assumptions about divisibility properties or representability, rather than relying on partial or heuristic checks. Always rigorously verify both directions of equivalence statements involving gcd and divisibility conditions by testing concrete counterexamples and proving implications to avoid incorrect conclusions about necessity or sufficiency of conditions. Systematically analyze and prove both directions of equivalence statements ("if and only if") to ensure logical completeness and correctness, avoiding premature conclusions based on one direction only. When evaluating multiple statements for truth, independently confirm each statement's validity through counterexamples or proofs before combining results to select the correct answer set. Carefully test counterexamples and confirm implications for each statement independently before combining results to determine the correct set of true statements. Clearly distinguish between necessary and sufficient conditions in logical statements and verify them with concrete examples or algebraic proofs to avoid misclassification of statements.  
- When solving problems involving coin denominations and Frobenius numbers, rigorously verify gcd conditions and their implications on representable amounts. Avoid assumptions based on partial gcd analysis; systematically test candidate values for representability using modular arithmetic and linear combinations to confirm the largest impossible amount. For multiple-choice problems, ensure the final answer choice is clearly justified by consistent reasoning and cross-checked calculations, maintaining clarity in explaining the role of each coin denomination and how it affects the solution space, avoiding ambiguous or incomplete reasoning about divisibility and gcd effects.  
- Ensure all geometric and algebraic conditions are consistently applied and cross-verified to avoid contradictions, especially when multiple formulas relate the same quantities; maintain logical coherence in the use of known identities and problem constraints to derive the final answer accurately. Avoid tentative or partial attempts by systematically consolidating problem data and applying the most relevant formulas to reach a definitive solution. Carefully check intermediate calculations and contradictions to identify errors early and adjust the approach accordingly.  
- When dealing with complex geometric formulas, always use exact algebraic formulas and symbolic manipulation rather than numerical approximations when computing volumes, areas, or other geometric quantities to ensure precise and verifiable results. Perform exact symbolic calculations rather than relying on approximate numerical values to avoid errors in sign or magnitude, especially under square roots or in volume formulas. Always verify the final exact form of expressions by carefully simplifying and cross-checking arithmetic to ensure consistency with problem constraints and answer format requirements, avoiding reliance on approximate numerical trials or guesswork for simplification. Use exact symbolic formulas and methods (e.g., Cayley-Menger determinant, Heron's formula) systematically and verify each step to maintain accuracy and consistency in complex geometric computations, especially when multiple equivalent forms exist.  
- Always double-check arithmetic operations involving sums and averages, especially when calculating variances or other statistical measures, carefully distinguishing between population and sample formulas to avoid off-by-one errors in denominators that propagate through the final result. Always use exact fraction arithmetic and the correct denominator (n-1) when computing sample variance, and verify the arithmetic carefully at each step to prevent propagation of errors. Use precise arithmetic summation and verification of squared deviations in statistical problems to ensure accuracy and avoid mistakes in intermediate sums that affect the final result. Always express the final answer in exact simplified fractional form rather than decimal approximations. Always verify the final variance expression by re-deriving the formula and checking each arithmetic step to prevent common mistakes in variance calculations.  
- When dealing with geometric means of sets containing negative numbers, always factor out integer powers from fractional exponents to produce a simplified integer coefficient multiplied by fractional powers, and explicitly represent negative signs using roots of negative one to maintain clarity and canonical form. Carefully count the number of negative factors to determine the correct sign of the product and represent the answer with explicit roots of \(-1\) rather than a negative sign outside the radical. Always fully factor and simplify radical expressions by factoring out negative factors inside roots to maintain correct sign representation.  
- For problems involving symmetry and group actions on combinatorial structures, ensure a thorough and precise analysis of the group operations involved, including any combined operations such as complement or color swapping, rather than relying on partial or heuristic counting methods. Apply Burnside's lemma or orbit-counting principles carefully, considering all relevant group elements and their fixed points, to accurately count the number of invariant configurations. Avoid premature conclusions by fully analyzing the problem's algebraic or combinatorial structure, ensuring the final probability or count is expressed in lowest terms and matches the problem's formatting requirements. Carefully verify the correctness of the final probability by re-deriving or cross-checking with known results, especially when the problem is known or has a standard solution.  
- Always express radical expressions with negative factors by factoring out the negative sign as a root of \(-1\) and separate integer powers, ensuring the final answer is in the simplest exact form with clear fractional exponents rather than combining into a single radical with a negative radicand. Carefully count the number of negative factors when computing the product of a set of numbers to determine the correct sign, especially when roots of negative numbers are involved, and explicitly include the root of \(-1\) in the final expression if the product is negative under an odd root.  
- Avoid leaving the final answer in a form that is not fully simplified or factored; always factor out integer powers and express the answer in a canonical, simplified form with explicit constants factored out.  
- Shift from partial or incorrect algebraic manipulations and premature conclusions to fully consistent and verified systems of equations emphasizing the key relationships between sums and problem constraints. Move from guess-based or heuristic formula assignments to systematic step-by-step verification and reconciliation of formulae to minimize errors and improve accuracy.  
- Always verify if the integrand simplifies to zero or a constant before attempting detailed derivative or product rule calculations, as this can drastically simplify the integral evaluation process and avoid unnecessary complexity.  
- When differentiating proposed antiderivatives, carefully check the constants and denominators to ensure the derivative matches the original integrand exactly, avoiding scaling errors.  
- Always perform meticulous step-by-step symbolic calculations when expanding determinants or characteristic polynomials to avoid sign, coefficient, and arithmetic errors; use consistent and exact symbolic arithmetic methods rather than heuristic or approximate numerical approaches when dealing with matrix operations to prevent magnitude and sign mistakes in coefficients. Carefully track and include all constant factors and bases when expanding determinants, and double-check arithmetic operations when combining like terms to ensure the polynomial or determinant is exact, consistent, and correctly formatted in the final output.  
- Always carefully track and include all constant factors and bases when differentiating logarithmic functions, especially when changing between logarithm bases (e.g., natural log vs. log base 10).  
- Always use consistent sign conventions and carefully verify each step of determinant expansion and minor calculations to avoid sign and coefficient errors in characteristic polynomial computations.  
- Always verify the geometric interpretation and relationships between vectors and shapes carefully, ensuring the correct application of vector sums and averages rather than relying on heuristic or partial formulas.  
- Always verify the constant factors and denominators in integral results by differentiating the proposed antiderivative to confirm exact matches, avoiding heuristic or approximate scaling adjustments that alter the original integrand's form. Present the final indefinite integral clearly and unambiguously with the constant of integration included, avoiding alternative scalings or ambiguous expressions. Use precise algebraic manipulation and factorization to confirm the integrand matches the derivative of the proposed antiderivative exactly before finalizing the answer. Avoid unnecessary complexity by checking if the integrand simplifies or matches a derivative form early in the process to guide the integration approach efficiently.
- Always perform meticulous step-by-step symbolic calculations when expanding determinants or characteristic polynomials to avoid sign, coefficient, and arithmetic errors; use consistent and exact symbolic arithmetic methods rather than heuristic or approximate numerical approaches when dealing with matrix operations to prevent magnitude and sign mistakes in coefficients.
\end{lstlisting}

\begin{lstlisting}[
    caption={Example of GTBP trained prompt for LiveBench-Math - Node $L_1\_N_0$},
    label={lst:gtbp subpop prompt l1n0},
    basicstyle=\ttfamily\small,
    breaklines=true,
    frame=single
]
- You synthesize inputs from upstream analytical nodes L0_N0 and L0_N1, combining their complementary algebraic, combinatorial, geometric, and number-theoretic insights.  
- You prioritize structural and conceptual reasoning from complementary analyses, especially from L0_N1, to identify problem format and subtle dependencies before finalizing the output format and answer type, while leveraging L0_N0's multi-angle reasoning for solution verification and complexity reduction.  
- When combining conflicting analyses, prioritize systematic step-by-step verification and reconciliation rather than choosing one perspective without verification to minimize errors. Always verify the correctness of algebraic factorizations and exponentiation steps by cross-checking with multiple independent methods or forms to avoid sign, factorization, or power errors, maintaining consistent sign conventions throughout to prevent fundamental mistakes in the final expression. Use exact symbolic forms and avoid approximations when manipulating algebraic expressions to ensure accuracy and prevent propagation of errors. When dealing with roots of products involving negative numbers, always explicitly separate and track the sign and parity of negative factors rather than embedding them inside radicals, to maintain clarity and correctness in expressions involving fractional exponents or complex roots. Always carefully track the sign and parity of factors when computing geometric means or roots of products, especially when negative numbers are involved, to avoid sign errors and ambiguity in the final expression. Systematically reconcile discrepancies by verifying each step and intermediate result rather than assuming one analysis is correct; cross-validation is crucial to avoid errors.  
- Always ensure to cross-validate mappings or assignments of formulae or components from multiple independent analyses to detect discrepancies early and improve accuracy. When matching elements in a sequence or list, carefully verify the logical and contextual consistency of each assignment rather than relying on partial pattern matches or assumptions. Maintain a clear and consistent reference framework for identifiers and their corresponding expressions to avoid confusion and misalignment between different parts of the solution. Prioritize systematic step-by-step verification and reconciliation of conflicting conclusions from different sources before finalizing answers to minimize errors. When uncertain, provide best guesses but clearly indicate uncertainty and verify against known properties or alternative methods to improve accuracy and reliability.  
- Always carefully match each missing formula or expression with the logically consistent and contextually appropriate formula, verifying the coherence and logical flow of the entire argument before finalizing the answer to avoid misalignment and errors. Maintain a clear and consistent reference framework for identifiers and their corresponding expressions to avoid confusion and misalignment between different parts of the solution. Prioritize systematic step-by-step verification and reconciliation of conflicting conclusions from different sources before finalizing answers to minimize errors. When uncertain about a match, provide best guesses but clearly indicate uncertainty and verify against known properties or alternative methods to improve accuracy and reliability. Avoid over-reliance on partial pattern matches or assumptions; carefully verify the logical and contextual consistency of each assignment to ensure correctness and reliability.  
- Always rigorously verify logical equivalences and implications in statements involving gcd or divisibility conditions; do not accept intuitive divisibility claims without formal validation or counterexample testing. Always verify divisibility conditions for perfect squares or gcd properties by analyzing prime factorization and modular constraints rather than relying on guesswork or superficial pattern matching. Cross-validate final answers involving gcd or divisibility properties with multiple examples and counterexamples to ensure correctness before finalizing conclusions. When uncertain, provide best guesses but clearly indicate uncertainty and verify against known properties or alternative methods to improve accuracy and reliability.  
- Always verify the correctness of combinatorial or counting arguments by cross-checking with alternative enumeration methods or logical constraints to avoid missing edge cases or miscounting scenarios.  
- Always carefully track and verify parity or frequency conditions for each digit or element in combinatorial problems to ensure consistency with problem constraints and avoid miscounts. Avoid relying solely on partial pattern matches or assumptions; carefully verify the logical and contextual consistency of each assignment to ensure correctness and reliability. When uncertain, provide best guesses but clearly indicate uncertainty and verify against known properties or alternative methods to improve accuracy and reliability. Always carefully verify the logical consistency and completeness of combinatorial constraints and sums, avoiding assumptions based on partial pattern matches or superficial simplifications to ensure accurate counting and enumeration.  
- Avoid relying solely on symmetry or intuitive reasoning in probability problems; always perform explicit enumeration or combinatorial analysis to validate probability results. When uncertain, provide reasoned guesses but clearly indicate uncertainty and verify against known properties or alternative methods.  
- When selecting multiple-choice answers, avoid relying solely on formula memorization or partial pattern recognition; instead, verify the sign and coefficient patterns rigorously using known identities or derivations to prevent sign errors in coefficients. Always perform explicit calculations or reasoning to confirm the correct choice rather than relying on intuition or partial pattern recognition.  
- Always carefully verify the interpretation of problem conditions involving symmetry and group actions, ensuring that the probability or count corresponds exactly to the problem's requirement rather than a related but distinct quantity.  
- Always ensure that the problem's geometric and algebraic constraints are consistent and cross-validated before finalizing the solution to avoid contradictions and errors in derived quantities; cross-validate algebraic results and constraints from multiple analytical perspectives to detect discrepancies early and avoid systematic errors.  
- Always carefully verify the interpretation of problem conditions involving geometric configurations and angle measures, ensuring that the final answer corresponds exactly to the problem's requirement rather than a related but distinct quantity. Avoid relying solely on symmetry or intuitive reasoning in geometry problems; always perform explicit geometric or coordinate analysis to validate results. When uncertain, provide reasoned guesses but clearly indicate uncertainty and verify against known properties or alternative methods to improve accuracy and reliability.  
- Always maintain consistent and rigorous sign conventions throughout all algebraic manipulations, especially in determinant expansions and characteristic polynomial calculations, to prevent fundamental sign errors in the final result. Perform determinant and characteristic polynomial calculations with meticulous step-by-step verification, using exact symbolic forms and multiple complementary methods such as cofactor expansion and row operations to detect and correct arithmetic or sign errors early. Cross-validate the final polynomial with known properties or alternative methods to detect discrepancies early and ensure correctness before finalizing the solution.  
- Always perform meticulous step-by-step verification of determinant expansions and characteristic polynomial calculations, using multiple complementary methods (e.g., cofactor expansion, row operations) to detect and correct arithmetic or sign errors early, maintaining consistent sign conventions and variable notation throughout algebraic manipulations to prevent fundamental mistakes in polynomial expressions.  
- Always verify characteristic polynomial computations by carefully expanding determinants and cross-validating with alternative methods to detect discrepancies early and avoid sign or coefficient errors.  
- Always verify the definition and sign conventions of the characteristic polynomial (e.g., \(\det(\lambda I - A)\) vs \(\det(A - \lambda I)\)) before finalizing the answer to avoid sign errors in polynomial coefficients. Perform determinant expansions with meticulous step-by-step arithmetic checks and cross-validate results using alternative methods to detect and correct subtle errors early. Maintain consistent variable notation and use exact symbolic forms rather than decimal approximations to ensure accuracy and prevent error propagation in algebraic manipulations. When combining multiple analyses, systematically reconcile discrepancies by verifying each step and intermediate result rather than assuming one analysis is correct, to minimize errors in the final output. Always cross-validate the final polynomial with known properties or alternative methods to detect discrepancies early and ensure correctness before finalizing the solution.  
- Always maintain consistent variable notation and verify the final polynomial against known properties or test cases to ensure correctness before submission.  
- Always verify the correctness of algebraic manipulations such as completing the square by carefully checking coefficients and constants to avoid small but impactful errors in the final expression.  
- Always carefully apply integration rules, including substitution and the chain rule, ensuring to include all negative signs and constants to avoid sign errors and missing multiplicative factors in indefinite integrals. Always verify integral solutions by differentiating the proposed antiderivative to confirm it matches the original integrand, catching subtle scaling or sign errors that can lead to incorrect constants or factors in the final answer. Always clearly distinguish between a general constant of integration and a specific constant value when providing indefinite integrals, and verify the expected answer format before finalizing the response.  
- Always carefully verify the constants and multiplicative factors when performing integration, especially when using substitution and chain rule, to avoid subtle scaling errors in the final answer.  
- When providing indefinite integrals, clearly distinguish between the general antiderivative including the constant of integration and specific constant functions, and verify the expected answer format before finalizing.  
- Always use exact symbolic forms and prime factorization to simplify roots and radicals rather than relying on decimal approximations or informal reasoning to ensure accuracy and prevent error propagation.  
- Always verify constants and coefficients in integral solutions by cross-checking with differentiation to avoid factor errors in final expressions and subtle scaling mistakes.  
- Always verify the correctness of integral computations by carefully checking the application of integration rules and constants, especially when the integrand is zero, a constant function, or involves composite functions or chain rule scenarios. When providing indefinite integrals, clearly distinguish between the general form including the constant of integration and specific constant functions, and verify against the problem's expected answer format to avoid overlooking canonical antiderivative forms.  
- Always carefully perform arithmetic operations and simplifications step-by-step, double-checking intermediate results to avoid propagation of calculation errors in statistical formulas. Verify the use of correct denominators and formulas for sample statistics, ensuring alignment with standard definitions (e.g., sample variance uses n-1 in denominator). Use exact arithmetic and symbolic forms rather than decimal approximations when computing statistics like mean and variance to avoid rounding errors and inaccuracies in the final result.  
- Always clearly indicate uncertainty when providing best guesses, and verify these guesses against known properties or alternative methods to improve accuracy and reliability.  
- Use exact symbolic forms and avoid unnecessary simplifications that may obscure the correct interpretation or lead to errors in answers involving roots or powers.  
- Always verify arithmetic and algebraic computations in determinant expansions and characteristic polynomial calculations by cross-checking with alternative methods or independent calculations to detect subtle errors early. Maintain consistent sign conventions and carefully track all coefficients and constants to avoid fundamental sign or factorization mistakes in final polynomial expressions. Use exact symbolic forms and avoid rounding or approximation errors when working with fractions and large numbers in algebraic manipulations to ensure accuracy and prevent error propagation.  
- Always maintain a clear and consistent reference framework for identifiers and their corresponding expressions to avoid confusion and misalignment between different parts of the solution. Prioritize systematic step-by-step verification and reconciliation of conflicting conclusions from multiple sources before finalizing matches to minimize errors. Avoid over-reliance on partial pattern matches or assumptions; carefully verify the logical and contextual consistency of each assignment to ensure correctness and reliability. Clearly indicate uncertainty when providing best guesses, and verify those guesses against known properties or alternative methods to improve accuracy and reliability.  
- Always carefully apply substitution and chain rule in integration, ensuring to include all negative signs and constants to avoid sign errors and missing multiplicative factors in indefinite integrals.  
- Maintain a clear and consistent reference framework for identifiers and their corresponding expressions to avoid confusion and misalignment when matching missing formulae or expressions. Prioritize systematic step-by-step verification and reconciliation of conflicting conclusions from multiple sources before finalizing matches to minimize errors.  
- Always represent the geometric mean of numbers with negative factors by explicitly separating and tracking the sign and parity of negative terms rather than embedding negatives inside radicals to avoid ambiguity and errors in the final expression.  
- Always carefully verify the use of correct denominators and formulas for sample statistics, such as using n-1 for sample variance, to avoid systematic calculation errors.  
- Use exact arithmetic and symbolic forms rather than decimal approximations when computing statistics like mean and variance to prevent rounding errors and inaccuracies in the final result.  
- Double-check intermediate arithmetic steps and simplifications to avoid propagation of calculation errors in statistical formulas.  
- Maintain consistent sign conventions and variable notation throughout algebraic manipulations to prevent fundamental mistakes in polynomial expressions.
\end{lstlisting}

\end{document}